\newtheorem{theorem}{Proposition}[section]
\def\BibTeX{{\rm B\kern-.05em{\sc i\kern-.025em b}\kern-.08em
    T\kern-.1667em\lower.7ex\hbox{E}\kern-.125emX}}
\newtcolorbox{answerbox}{
colback=white!20!gray, 
  colframe=black,
  fonttitle=\bfseries,
  before=\par\medskip\noindent,
  after=\par\medskip,
  coltitle=black,
  colbacktitle=white,
  colframe=white
}
\begin{document}

\title{Towards Better Fairness-Utility Trade-off: A Comprehensive Measurement-Based Reinforcement Learning Framework\\
}
\author{
\IEEEauthorblockN{Simiao Zhang}
\IEEEauthorblockA{East China Normal Universitty \\
smzhang@stu.ecnu.edu.cn}
\and
\IEEEauthorblockN{Jitao Bai}
\IEEEauthorblockA{Tianjin University \\
jitaobai_123@tju.edu.cn}
\and
\IEEEauthorblockN{Menghong Guan}
\IEEEauthorblockA{East China Normal University \\
mhguan@stu.ecnu.edu.cn}
\and
\IEEEauthorblockN{Yihao Huang}
\IEEEauthorblockA{Nanyang Technological University \\
huangyihao22@gmail.com}
\and
\IEEEauthorblockN{Yueling Zhang}
\IEEEauthorblockA{East China Normal University \\
ylzhang@sei.ecnu.edu.cn}
\and
\IEEEauthorblockN{Jun Sun}
\IEEEauthorblockA{Singapore Management University \\
junsun@smu.edu.sg}
\and
\IEEEauthorblockN{Geguang Pu}
\IEEEauthorblockA{East China Normal University \\
ggpu@sei.ecnu.edu.cn}
}

\maketitle
\newcommand{\ACC}{ACC} 
\newcommand{\AUC}{AUC}
\newcommand{\DI}{DI}
\newcommand{\SPD}{SPD}
\newcommand{\EOD}{EOD}
\newcommand{\AOD}{AOD}
\newcommand{\ERD}{ERD}
\newcommand{\LR}{LR}
\newcommand{\SVM}{SVM}
\newcommand{\MLP}{MLP}
\newcommand{\CFU}{$\mathrm{CFU}$}
\newcommand{\ROC}{$\mathrm{ROC}$}
\newcommand{\ADV}{$\mathrm{ADV}$}
\newcommand{\DIR}{$ \mathrm{DIR}$}
\newcommand{\GetFair}{$ \mathrm{GetFair}$}
\newcommand{\CNEW}{$ \mathrm{CFP_{new}}$} 
\newcommand{\CNOR}{$ \mathrm{CFP_{nor}}$} 
\newcommand{\winwin}{\textit{win-win}} 
\newcommand{\good}{\textit{good}} 
\newcommand{\inverted}{\textit{inverted}} 
\newcommand{\bad}{\textit{bad}} 
\newcommand{\loselose}{\textit{lose-lose}} 

\definecolor{yc}{RGB}{62,179,195}
\newcommand{\chen}[1]{{\color{yc}[yangchen: #1]}}
\newcommand{\chendelete}[1]{{\color{yellow}[delete: #1]}}
\newcommand{\yihao}[1]{{\color{red}[yihao: #1]}}
\newcommand{\yueling}[1]{{\color{magenta}[yueling: #1]}}

\begin{abstract}
Machine learning is widely used to make decisions with societal impact such as bank loan approving, criminal sentencing, and resume filtering. How to ensure its fairness while maintaining utility is a challenging but crucial issue. 
Fairness is a complex and context-dependent concept with over 70 different measurement metrics. 
Since existing regulations are often vague in terms of which metric to use and different organizations may prefer different fairness metrics, it is important to have means of improving fairness comprehensively.
Existing mitigation techniques often target at one specific fairness metric and have limitations in improving multiple notions of fairness simultaneously. 
In this work, we propose CFU (\underline{C}omprehensive \underline{F}airness-\underline{U}tility), a reinforcement learning-based framework, to efficiently improve the fairness-utility trade-off in machine learning classifiers.
A comprehensive measurement that can simultaneously consider multiple fairness notions as well as utility is established, and new metrics are proposed based on an in-depth analysis of the relationship between different fairness metrics. 
The reward function of CFU is constructed with comprehensive measurement and new metrics.
We conduct extensive experiments to evaluate CFU on 6 tasks, 3 machine learning models, and 15 fairness-utility measurements. The results demonstrate that CFU can improve the classifier on multiple fairness metrics without sacrificing its utility. It outperforms all state-of-the-art techniques and has witnessed a 37.5\% improvement on average.
\end{abstract}

\begin{IEEEkeywords}
Machine learning software, fairness-utility trade-off, comprehensive measurement, metrics, reinforcement learning
\end{IEEEkeywords}

\section{Introduction}
Machine learning (ML) techniques has been widely used in many areas, such as criminal sentencing \cite{lcifr}, medical imaging \cite{ricci2022addressing}, and credit risk evaluation \cite{KHANDANI20102767}. As a common task for ML software, classification is drawing special attention in helping people make decisions. Though the software usually has high accuracy in classification tasks, it has also raised public concerns about its fairness. For instance, it is found that a neural network trained to predict recidivism is more likely to label male or non-Caucasian with higher risks \cite{mattuMachineBias}. 
Moreover, in terms of neural networks trained for credit rating, males or the elderly are more likely to get better credit \cite{german}. Such decisions are unacceptable since they would bring moral condemnations or even legal issues. 
Therefore, it is critical to guarantee the fairness of these techniques. 

To address the issue of fair classification, several bias mitigation techniques have been developed \cite{cct11, cct15, cct20, cct25, cct26, cct28, cct29, cct30}. 
But in general, they are plagued by two challenges: the complex notions of fairness, and the trade-off between fairness and utility.
The first challenge for handling fairness issues lies in the conflicts of different fairness measurements \cite{impossible}.
It is described that there are at least 21 mathematical definitions on fairness in current literature \cite{aif360}, like Statistical Parity \cite{DI}, Equalized Odds \cite{AOD}, Equal Opportunity \cite{AOD}, etc. To quantitatively measure fairness, over 70 metrics \cite{aif360} have been proposed based on those definitions. 
Although they have provided an understanding of fairness from different perspectives, it is hard to combine these metrics together due to the inevitable conflicts among them \cite{cct5, cct31}. 
But a comprehensive set of metrics is usually necessary for evaluation of fairness \cite{cct6}, as no current legal frameworks have appointed which metric to use \cite{lcifr, KHANDANI20102767} and there is no one best metric relevant for all contexts \cite{aif360}.
Another issue concerns the trade-off between fairness and utility.
It is generally believed that such trade-off is inherent \cite{cct3}. That means an increase in fairness cannot be achieved without sacrificing accuracy \cite{intro1,cct31, intro2, fairea, intro3}.
An empirical study by Zhang and Sun \cite{cct31} has shown that many existing bias mitigation techniques that can improve fairness would always result in significant accuracy loss.
Furthermore, whether existing techniques are still effective when it comes to the trade-off across multiple metrics is also doubtful.
Unfortunately, solutions for the two challenges have rarely been reported in existing literature.
Even though some studies have tried to cover the challenges (e.g., Ref. \cite{getfair}), there is still a lack of generality.
Therefore, it is necessary to develop a generalized technique that can simultaneously improve multiple notions of fairness while balancing the trade-off between utility and fairness.
In this work, a comprehensive measurement is established through statistical methods, which can simultaneously consider multiple metrics with different scales and monotonicity.
We present both theoretical and empirical results to illustrate the relationship of some fairness metrics, based on which new independent metrics that are more effective are proposed.
With the newly developed comprehensive measurement and metrics, a reinforcement learning framework named \CFU~(\underline{C}omprehensive \underline{F}airness-\underline{U}tility) is proposed to improve
the fairness-utility trade-off of ML classifier.
Experimental investigation of \CFU~is conducted on 6 widely-used classification tasks and 3 ML algorithms, and it is compared with 4 state-of-the-art methods across 15 fairness-utility measurements (i.e., combinations of 5 fairness metrics and 3 utility metrics) by one recent benchmarking tool. 
Results show that \CFU~can enhance ML classifiers across 5 fairness metrics while maintaining acceptable utility on 3 utility metrics.
Moreover, with our new metrics, an improvement in the fairness-utility trade-off of \CFU~could be expected compared with the combinations of existing metrics. 
\CFU~can outperform all the compared methods across all ML algorithms and tasks, and has witnessed a 37.5\% improvement on average. 

In summary, we make the following contributions:
\begin{itemize}
\item We propose \CFU, a reinforcement learning framework, which can improve fairness-utility trade-off of ML classifiers across multiple metrics.
\item We establish a comprehensive measurement for multiple metrics regardless of their scale and monotonicity.
\item We reveal the coupling effects between different fairness metrics, and develop new metrics which allow us to improve multiple fairness metrics simultaneously.
\item We comprehensively evaluate \CFU~on 6 tasks and 3 ML algorithms, and compare it with 4 state-of-the-art methods. 
\CFU~has significantly improved fairness-utility trade-off across all tasks and models.
\end{itemize}

The rest of the paper is organized as follows. In Section \ref{chap:preli}, we review some background knowledge. The methodology of this work is presented in detail in Section \ref{chap:method}. In Section \ref{chap:exper}, we describe the experimental setup and research questions, and the results are discussed in Section \ref{chap:resul}. 
The conclusions are drawn in Section \ref{chap:concl}.

\section {Background}\label{chap:preli}

\subsection{Fairness Definitions}\label{sec:3.1.2}
We begin with introducing the relevant terminology from the field of machine learning fairness. 
$sensitive~attribute$ (e.g., race, sex, age, religion) is an input feature associated with a protected characteristic, which is application specific.
Based on the value of sensitive attributes, all individuals are divided into two groups, i.e. $privileged~groups$ (have advantages in getting favorable labels) and $unprivileged~groups$ (have disadvantages in getting favorable labels). 
Let $D=\{X, Y, Z\}$ be a dataset where $X$ is the training features, $Y$ is the binary classification label ($Y=1$ for favorable label and $Y=0$ for unfavorable label), and $Z$ is the binary sensitive attribute ($Z=1$ for privileged group and $Z=0$ for unprivileged group).
Define $\hat{Y}$ as the predicted label ($\hat{Y}$=1 for favorable label and $\hat{Y}$=0 for unfavorable label), and then some probability terms are given in Table~\ref{table1}. 
True positive rates for privileged group and unprivileged group are denoted by $TPR_p$ and $TPR_u$ respectively. Similarly, $FPR_p$ and $FNR_p$ represent the false positive rate and false negative rate for privileged group respectively, while $FPR_u$ and $FNR_u$ represent the false positive rate and false negative rate for unprivileged group respectively.

 \begin{table}[!ht]
 \caption{Definitions for different terms}
    \centering
    \begin{tabular}{ccccc}
    \hline
        \textbf{Terms} & \textbf{Definitions} \\ \hline
        $FPR_u$ & $P[\hat{Y}=1 \mid Y=0, Z=0]$ \\ 
        $FNR_u$ & $P[\hat{Y}=0 \mid Y=1, Z=0]$ \\ 
        $TPR_u$ & $P[\hat{Y}=1 \mid Y=1, Z=0]$ \\ 
        $FPR_p$ & $P[\hat{Y}=1 \mid Y=0, Z=1]$ \\ 
        $FNR_p$ & $P[\hat{Y}=0 \mid Y=1, Z=1]$ \\ 
        $TPR_p$ & $P[\hat{Y}=1 \mid Y=1, Z=1]$ \\ 
       \hline
    \end{tabular}
    \label{table1}
\end{table}

We proceed to briefly introduce 4 group-level fairness from both their statistical features and applications, along with the corresponding fairness metrics that are used to quantify the degree of fairness.
We focus on the following fairness notions as they are widely used in fairness literature \cite{cct6,aif360, whyhow, uclcom, ucltest, maat}.

\subsubsection{Statistical parity}
A classifier satisfies statistical parity if the proportion of individuals classified as the favorable labels is equal for two subgroups. That is,
$$P[\hat{Y}=1|Z=0]=P[\hat{Y}=1|Z=1]$$
Statistical parity is suitable for cases like employment or admissions, where there is a desire or legal requirement to ensure that individuals from different subgroups are employed or admitted in equal proportions.

For a given model, we can measure its degree of statistical parity using $\textit{disparate}$ $\textit{impact}$ (DI) \cite{DI} or $\textit{statistical}$ $\textit{parity}$ $\textit{difference}$ (SPD) \cite{SPD}.
\begin{equation}
DI = min(\frac{P[\hat{Y}=1|Z=0]}{P[\hat{Y}=1|Z=1]},\frac{P[\hat{Y}=1|Z=1]}{P[\hat{Y}=1|Z=0]})
\label{fair5}
\end{equation}
\begin{equation}
SPD = |P[\hat{Y}=1|Z=0]-P[\hat{Y}=1|Z=1]|
\label{fair1}
\end{equation}

\subsubsection{Equalized odds}
A classifier satisfies equalized odds if two subgroups have equal true positive rates and false positive rates. That is,
$$TPR_u = TPR_p~and~FPR_u = FPR_p$$
Equalized odds is satisfied provided that regardless of the race or gender of applicants, they have an equal chance of being admitted to one program if they are qualified, and an equal chance of being rejected if they are not qualified.

For a given model, we can measure its degree of equalized odds using $\textit{average odds difference}$ (AOD) \cite{AOD}.
\begin{equation}
AOD =\frac{1}{2}[(FPR_{u} - FPR_{p}) + (TPR_{u} - TPR_{p})]
\label{fair3}
\end{equation}

\subsubsection{Equal opportunity}
A classifier satisfies equal opportunity if the true positive rates are the same for the two subgroups. That is,
$$TPR_{u} = TPR_{p}$$
Equal opportunity aims to ensure people with similar abilities have an equal chance to achieve a desired outcome regardless of their attributes. 

For a given model, we can measure its degree of equal opportunity using $\textit{equal opportunity difference}$ (EOD) \cite{AOD}.
\begin{equation}
EOD = TPR_{u} - TPR_{p}
\label{fair2}
\end{equation}

\subsubsection{Error rate balance}
A classifier satisfies error rate balance if the false positive and false negative error rates are equal across groups. That is,
$$FPR_{u} = FPR_{p}~and~FNR_{u} = FNR_{p}$$
Error rate balance is important in situations where there is a need to ensure that the cost of errors is distributed fairly across different groups, such as in criminal justice or credit scoring. 

For a given model, we can measure its degree of error rate balance using $\textit{error rate difference}$ (ERD) \cite{ERD}.
\begin{equation}
ERD = (FPR_{u}+FNR_{u})-(FPR_{p}+FNR_{p})
\label{fair4}
\end{equation}


\subsection{Bias Mitigation Techniques} 
Bias mitigation techniques can be classified into three categories, including pre-processing techniques, in-processing techniques, and post-processing techniques \cite{cct7}.

Pre-processing techniques are developed based on the assumption that the root cause of unfairness comes from the biased features in training data~\cite{cct9}. Therefore, this family of techniques works on the dataset before training so that models can produce fairer predictions. Preprocessing of data could usually be achieved through the suppression of the sensitive attribute, massaging the dataset by changing class labels, and reweighing or resampling the data to remove discrimination without relabeling instances \cite{cct10}. A representative technique is reweighing \cite{cct11}, which assigns different weights to classes of input according to their degree of being favored \cite{cct6}. Others include modifying feature values \cite{cct12} or feature representations~\cite{cct13,cct14}. Semi-supervised learning framework in pre-processing phase has also been reported \cite{cct15}. However, modifying training data may have unexpected impacts on model fairness \cite{cct16}, and there is a huge controversy in the community on whether such action is acceptable.

In-processing techniques modify the ML model to mitigate the bias in predictions. One common technique is adversary training. Usually, adversary training is performed on model level \cite{cct17,cct18,cct19}, but that on individual neuron level has also been developed \cite{cct20}. Like the case in pre-processing techniques, supervised and semi-supervised learning are also applied in some in-processing techniques \cite{cct21,cct22}. Recent studies have proposed special designs for hyperparameters as another possible strategy for improving fairness in ML models \cite{cct23,cct24}. It is worth noting that some pre-processing methods have been integrated in in-processing techniques to form new in-processing ones \cite{cct25,cct26}.

Post-processing techniques modify the prediction results directly for less bias. They remove discrimination by adjusting some predictions of a classifier \cite{AOD}. Multiple post-processing techniques have been developed based on post-hoc disparity \cite{cct28}, Gaussian process \cite{cct29}, group-aware threshold adaptation \cite{cct30}, etc.

In addition to the techniques mentioned above, causality analysis is another way to improve model fairness \cite{cct31,cct32,cct33}. Several strategies like Pareto principle \cite{cct34,cct35} and mutual information \cite{cct36} have been explored in these techniques for better trade-off between fairness and performance.
\subsection{Reinforcement Learning for Fairness}
Fairness in reinforcement learning (RL) was first investigated in 2017 \cite{cct37}. Then the fairness of reinforcement frameworks in real-world applications has been widely studied \cite{cct38,cct39,cct40}. Deep reinforcement learning was also used for fairness testing in ML models \cite{cct41}. However, it should be noted that the studies at that time only cares about the fairness in reinforcement frameworks, rather than improving fairness through RL. It is not until 2021 that RL methods were applied to solve the problems concerning fairness in classification. In this study, Monte Carlo policy gradient method was developed to deal with non-differentiable constraints \cite{cct42}. After that, RL has been used to achieve fairness in recommendation \cite{cct43} and ranking problems \cite{cct44}. Sikdar et al. \cite{getfair} developed a reinforcement framework called \GetFair, which can get competitive accuracy fairness trade-offs across tested classifier models, datasets, and fairness metrics. 
It can also simultaneously improve statistical parity and equal opportunity by directly averaging the two fairness scores in the reward function. However, it cannot directly average more fairness metrics that have different monotonicity and value domains. Additionally, it  ignores the relationships between different fairness notions, and the performance is unclear when more fairness metrics are involved.


\begin{figure*}[htbp]
  \centering
  \includegraphics[width=1.0\textwidth]{./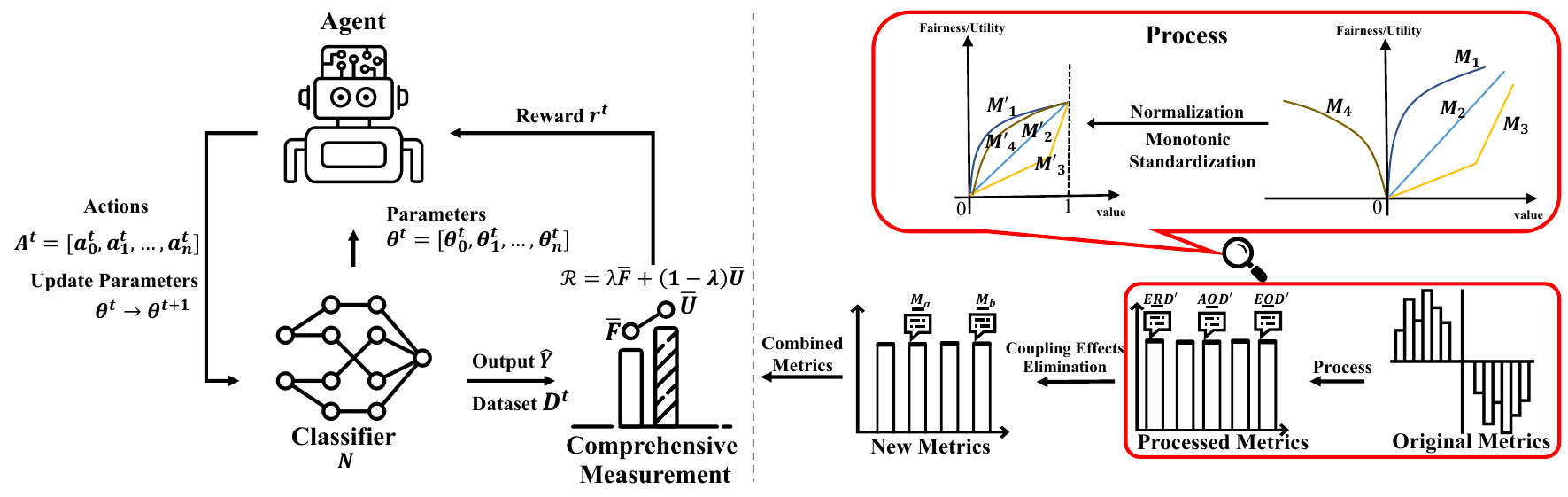}
  \caption{Overview of \CFU. 
The left part shows the training process. At step $t$, the agent inputs parameter $\theta^t$ and outputs the update directions
$A^t$.  $\theta^{t+1}$ is computed from $\theta^t$ by taking a step towards $A^t$. Then the classifier $N_{\theta^{t+1}}$ and the tuning dataset $D^{t}$ are used to calculate the reward $r^t$.
The right part shows the comprehensive measurement of the reward function from right to left. 
Original metrics are processed to the same monotonicity and scales. 
To eliminate coupling effects among metrics, we introduce new metrics $M_a$ and $M_b$.  The values of processed fairness (utility) metrics are averaged to obtain $\bar{F}$ ($\bar{U}$). 
}
  \label{fig:CFParchi}
\end{figure*}

\section{Methodology}\label{chap:method}
In this section, we first give an overview of CFU, then introduce the comprehensive measurement that combines multiple metrics in the reward function. 
Finally, we analyze the relationship of existing metrics and propose new metrics to enhance the performance of \CFU.

\subsection{CFU Framework}
The overall framework of \CFU ~is shown in Figure~\ref{fig:CFParchi}, in which the training process is shown on the left, and the construction of the reward function on the right. The training process is developed based on \GetFair \cite{getfair}, and the reward function is constructed based on comprehensive measurement introduced in Section~\ref{sec3.4}.

\textbf{Training process.} 
As a reinforcement learning framework, the structure of \CFU ~is also composed of the agent and environment, i.e., 
the classifier $N_{\theta}$ with parameters $\theta=[\theta_0, \theta_1,...,\theta_n]$  that we want to improve for better fairness-utility trade-off.
We adopt the standard notations \cite{FIGCPS} for the state $s^t$, action $a^t$, reward $r^t$, and policy $\pi$.
At the time step $t$, state is the parameters $\theta^t=[\theta^t_0, \theta^t_1,...,\theta^t_n]$ of the classifier,  
action $a^t_i \in \{-1, 1\}$ is the update direction of the $\theta^t_i$,
and reward $r^t$ is calculated by the reward function $\mathcal{R}$ defined in Eq~\ref{eqNext}.
Policy $\pi_{\phi}$ is a neural network called meta-optimizer with the parameters $\phi$, which takes in the parameter sequence $\theta^t$ and outputs an action sequence $A^t=[a^t_0, a^t_1, ..., a^t_n]$.

The detailed process for updating the parameters of the classifier is as follows. 
In the beginning, we get the classifier $N_{\theta^0}$ trained for utility.
At a time step $t$ in an episode, $\pi_{\phi}$ takes the parameters $\theta^t$ as input and generates the action $A^t$. 
The next parameter $\theta^{t+1}$ is computed from $\theta^t$ by taking a step towards the direction $A^t$ as Eq.~\ref{equpdate},
\begin{equation}
\label{equpdate}
    \theta^{t+1} = \theta^t + A^t * lr * c(t)
\end{equation}
where $lr$ is a predetermined step size similar to the learning rate used in gradient descent methods, and $c(t)$ is an optional scaling parameter that reduces the step size when approaching the optimum.
The classifier with the parameter $\theta^{t+1}$ is then used to calculate the reward on tuning dataset $D_t \subset D$. 
This process is repeated to generate an episode, and a trajectory is obtained: 
$$traj = (\theta^0, A^0, r^0, \theta^1, A^1, r^1, ..., \theta^t, A^t, r^t, ...).$$
The episode ends when a predetermined number of steps is reached or when the utility of $N_{\theta^t}$ reaches a threshold. 
$\phi$ is updated at the end of each episode based on the rewards collected during the entire episode using the REINFORCE algorithm \cite{policygradient}. 

\textbf{Reward function.} The reward function is constructed with the comprehensive fairness measurement ($\bar{F}$) and comprehensive utility measurement ($\Bar{U}$) defined in Section~\ref{sec3.4}, with the weights adjusted by parameter $\lambda$ ($0 \leq \lambda \leq 1$), as shown in Eq.~\ref{eqNext}.
\begin{equation}
\label{eqNext}
    \mathcal{R}=\lambda \Bar{F} + (1-\lambda)\Bar{U}
\end{equation}

\textbf{Get updated models.}
During the training process, we obtain different parameters $\theta$ and corresponding models $N_{\theta}$. Meanwhile, we record the scores of $N_{\theta}$ on comprehensive fairness measurement and comprehensive utility measurement, record as $\bar{F}_{\theta}$ and $\bar{U}_{\theta}$ respectively. Using $\bar{F}_{\theta}$ and $\bar{U}_{\theta}$ of each model as the horizontal and vertical coordinates respectively, we can obtain their scatters. The set of models that form the convex hull of these points is denoted by $\theta^{*}$, which represents the final models obtained from the training process.

\subsection{Comprehensive Measurement}
\label{sec3.4}
In this section, we present how to take multiple metrics into consideration.
The most straightforward way is to put them together. For metrics with positive values, we can directly add them up. While for those with negative values, we can add up their absolute values. However, different metrics may have different scales, and the value change in metrics with small scales would have little influence on the sum. In other words, the effect of these metrics has been diluted. Therefore, the metrics should be normalized first. In addition, different metrics can have different monotonicity. That is, a larger/smaller value for metrics does not necessarily mean better fairness or utility. Thus, we cannot directly add the metrics up even if they are in the same scale, and the manipulation of monotonicity standardization is necessary. 

As shown in Figure~\ref{fig:metrics}, metrics can be classified into three categories with different monotonicity, including monotonically increasing metrics, monotonically decreasing metrics, and non-monotonic metrics. If a larger value in a metric indicates better fairness or higher utility, then the metric is defined as a monotonically increasing metric (Figure~\ref{fig:metrics} (a)). On the contrary, if a smaller value in a metric indicates better fairness or higher utility, then the metric is defined as a monotonically decreasing metric (Figure~\ref{fig:metrics} (b)). While for those whose values have no monotonic correspondence to fairness or utility, they are defined as non-monotonic metrics (Figure~\ref{fig:metrics} (c)). 

\begin{figure}[htbp]
\centering
\includegraphics[width=0.48\textwidth]{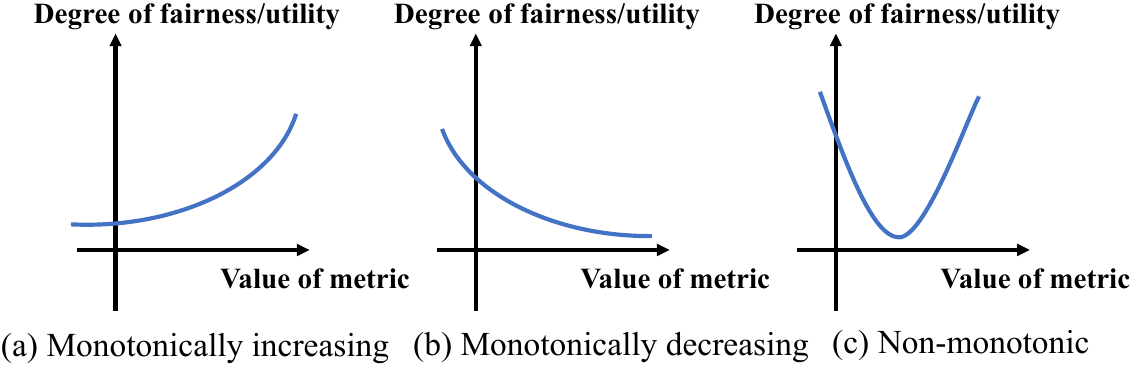}
  \caption{Monotonicity of different metrics}
  \label{fig:metrics}
\end{figure}

Metrics in each category should be processed accordingly so that all the processed metrics share the same monotonicity. In this paper, we process all the metrics to be monotonically increasing with a value range of [0, 1]. The manipulation for metrics with different monotonicity is presented as follows.

\textbf{Monotonically Increasing Metrics}

Denote monotonically increasing metrics by $x_A$, and the normalized metrics can be expressed as Eq.~\ref{eq1},
\begin{equation}
x_A^{\prime}=\frac{x_A-x_{A \min }}{x_{A \max }-x_{A \min }}
\label{eq1}
\end{equation}
where $x_A^{\prime}$ is the normalized metric, $x_{A_{min}}$ is the minimum of the metric, and $x_{A_{max}}$ is the maximum of the metric.

Obviously, there is $x_A^{\prime} \in [0,1]$. Since $x_A$ is already monotonically increasing, the eventual processed metric $X_A$ is then obtained as $X_A=x_A^{\prime}$.

\textit{Example} The accuracy metric is monotonically increasing, as its higher value indicates better utility. 
Let $x_A$ be the value of an accuracy metric, with the maximum value ($x_{A_{max}}$) of 1 and minimum value ($x_{A_{min}}$) of 0.
Then there is $X_A = x^{\prime}_{A} = \frac{x_A - 0}{1-0} = x_{A}$. The processed metric is the same as the original one.

\textbf{Monotonically Decreasing Metrics}

Denote monotonically decreasing metrics by $x_B$, and the normalized metrics can be expressed as Eq.~\ref{eq3},
\begin{equation}
x_B^{\prime}=\frac{x_B-x_{B \min }}{x_{B \max }-x_{B \min }}
\label{eq3}
\end{equation}
where $x_B^{\prime}$ is the normalized metric, $x_{B_{min}}$ is the minimum of the metric, and $x_{B_{max}}$ is the maximum of the metric.

Obviously, there is $x_B'\in[0,1]$. Since $x_B$ is monotonically decreasing, the eventual processed metric $X_B$ should be further calculated with $X_B=1-x_B^{\prime}$.

\textit{Example} As shown in Eq.~\ref{fair1}, SPD is monotonically decreasing, as its lower value indicates better fairness. 
Let $x_B$ be the value of SPD, with the maximum value ($x_{B_{max}}$) of 1 and minimum value ($x_{B_{min}}$) of 0.
Then there is $X_B = 1- x^{\prime}_{B} = 1 - \frac{x_B - 0}{1-0} = 1 - x_B$. The processed metric $SPD^{\prime}$ is presented in Eq.~\ref{eqrspd}.
\begin{equation}
SPD^{\prime} = 1 - |P[\hat{Y}=1|Z=0]-P[\hat{Y}=1|Z=1]|
\label{eqrspd}
\end{equation}

\textbf{Non-monotonic Metrics}

Denote a non-monotonic metric by $x_C$, the minimum of the metric by $x_{C_{min}}$, and the maximum of the metric by $x_{C_{max}}$, then there exists a metric value $x_0 \in (x_{C_{min}},x_{C_{max}})$ that indicates the best fairness or highest utility. Take non-monotonic fairness metrics as an example, $x_C=x_0$ indicates the best fairness, $x_C<x_0$ indicates bias towards privileged group (unprivileged group), while $x_C>x_0$ indicates bias towards unprivileged group (privileged group). In this case, the normalized metrics can be calculated as Eq.~\ref{eq5}.
\begin{equation}
x_C{ }^{\prime}= \begin{cases}\frac{x_C-x_{C \min }}{x_0-x_{C \min }} & x_C \leq x_0 \\ \frac{x_C-x_{C \max }}{x_0-x_{C \max }} & x_C>x_0\end{cases}
\label{eq5}
\end{equation}   

Obviously, there is $x_C^{\prime}\in[0,1]$, and a larger value of $x_C^{\prime}$ indicates better fairness or higher utility. Thus, the eventual processed metric $X_C$ is obtained as $X_C=x_C^{\prime}$.

In particular, when $x_{C_{min}}$ and $x_{C_{max}}$ are symmetrical about (i.e., $x_0=\frac{1}{2}(x_{C_{min}}+x_{C_{max}})$), the processed metric $X_C$ can be readdressed as Eq.~\ref{eq7}.
\begin{equation}
X_C=1-2 \frac{\left|x_C-x_0\right|}{x_{C \max }-x_{C_{\text {min }}}}
\label{eq7}
\end{equation} 

\textit{Example} As shown in Eq.~\ref{fair2}, EOD is non-monotonic,
as a value closer to 0 indicates better fairness. 
Let $x_C$ be the value of EOD, with the maximum value ($x_{C_{max}}$) of 1 and minimum value ($x_{C_{min}}$) of -1. When the metric value is zero ($x_0 = 0$) there is no bias.
Then there is $X_C = 1 - 2 \frac{\left|x_C-x_0\right|}{x_{C_{max}}-x_{C_{min}}}=1 - 2\frac{|x_C-0|}{1-(-1)} = 1 - |x_C|$. The processed metric $EOD^{\prime}$ is presented in Eq.~\ref{eqeod}
\begin{equation}
EOD^{\prime}=1-\left|TPR_u-TPR_p\right| 
\label{eqeod}
\end{equation}  

Denote the mean value of the processed metrics by $\bar{X}$, then there is Eq.~\ref{eq8}.
\begin{equation}
\bar{X}=\frac{1}{N}\left(\sum_{i=1}^I X_{A i}+\sum_{j=1}^J X_{B j}+\sum_{k=1}^K X_{C k}\right)
\label{eq8}
\end{equation}   
\begin{equation}
N=I+J+K
\label{eq9}
\end{equation}   
where $X_{A_i}$ is the value of the $i$-th processed monotonically increasing metrics, $X_{B_j}$ is the value of the $j$-th processed monotonically decreasing metrics, $X_{C_k}$ is the value of the $k$-th processed non-monotonic metrics, $I$ is the number of the processed monotonically increasing metrics employed in an experiment, $J$ is the number of the processed monotonically decreasing metrics employed in the experiment, $K$ is the number of the processed non-monotonic metrics employed in the experiment, and $N$ is the total number of the metrics employed in the experiment and satisfies Eq.~\ref{eq9}.

We define the mean value of the processed fairness metrics (denoted by $\Bar{F}$) as comprehensive fairness measurement, and that of the processed utility metrics (denoted by $\Bar{U}$) as comprehensive utility measurement.
Obviously, $\bar{F}$ has covered multiple fairness metrics, so has $\bar{U}$. 
\subsection{New Metrics}
\label{newmetrics}
Section \ref{sec3.4} has provided an effective method for taking multiple metrics simultaneously into account. But it should be noted that not all metrics are suitable to be combined together. 
We assume that the coupling effects between metrics have a negative effect on the training process, and using metrics that are equivalent but more independent is better. We have demonstrated this through the theoretical analysis in this section and the empirical results in \ref{sec:rq2}.

Take the commonly used EOD, AOD, and ERD as an example, the combination of the three fairness metrics can make the result degenerate into a specific metric.

\begin{theorem}
\label{theorem1}
The combination of EOD, AOD and ERD can always be expressed with a certain one metric of the three.
\end{theorem}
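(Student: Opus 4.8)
The plan is to show that the three metrics are linearly dependent by reducing them to a common pair of free quantities and then exhibiting the single identity that ties them together. First I would fix notation: write $e = TPR_u - TPR_p$, which is exactly EOD by Eq.~\ref{fair2}, and let $f = FPR_u - FPR_p$ denote the gap in false positive rates between the two groups. With these, the definition of AOD in Eq.~\ref{fair3} reads immediately as $\mathrm{AOD} = \tfrac{1}{2}(f + e)$, so two of the three metrics are already written through $e$ and $f$.

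The key step is to rewrite ERD using the elementary probabilistic identity that, conditioned on $Y=1$, the true positive and false negative rates partition the outcome, so that $TPR_u + FNR_u = 1$ and $TPR_p + FNR_p = 1$ hold within each group. Substituting $FNR_u = 1 - TPR_u$ and $FNR_p = 1 - TPR_p$ into the definition of ERD in Eq.~\ref{fair4} cancels the constant terms and gives $\mathrm{ERD} = (FPR_u - FPR_p) - (TPR_u - TPR_p) = f - e$. At this point all three metrics live on the two quantities $e$ and $f$, namely $\mathrm{EOD} = e$, $\mathrm{AOD} = \tfrac{1}{2}(e+f)$, and $\mathrm{ERD} = f - e$.

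From these three expressions I would eliminate the auxiliary $f$ to obtain the single linear relation $2\,\mathrm{AOD} = 2\,\mathrm{EOD} + \mathrm{ERD}$, equivalently $\mathrm{AOD} = \mathrm{EOD} + \tfrac{1}{2}\mathrm{ERD}$. Solving this identity for any chosen metric then recovers that metric from the remaining two, so the trio spans only a two-dimensional set constrained by one relation rather than three independent signals; in the paper's terminology this is exactly the sense in which jointly combining EOD, AOD, and ERD degenerates, since fixing one metric as the anchor pins the combined objective along this single constraint and no third independent fairness direction exists. I would close by stressing that the derivation uses only identities valid for every confusion matrix, so the dependence is structural rather than a data-dependent coincidence.

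The step I expect to carry the whole argument is the within-group substitution $FNR = 1 - TPR$: this normalization is precisely what destroys the apparent three-dimensionality and forces the metrics into a two-dimensional span. There is no heavy computation to fear; the only real care needed is bookkeeping the signs and the factor $\tfrac{1}{2}$ in AOD so that the coefficients in $2\,\mathrm{AOD} = 2\,\mathrm{EOD} + \mathrm{ERD}$ come out correctly, together with a remark that the complementary identity $FPR + TNR = 1$ on the $Y=0$ stratum is not required here because the true negative rate never enters any of the three definitions.
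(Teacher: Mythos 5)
Your algebra is correct as far as it goes, and you have correctly identified the same structural fact the paper leans on (the within-group identity $TPR+FNR=1$, which collapses the three metrics onto two free quantities). But what you prove is a different and strictly weaker statement than the proposition. You establish the signed linear dependence $2\,\mathrm{AOD}=2\,\mathrm{EOD}+\mathrm{ERD}$, i.e.\ that any one metric is recoverable from the other \emph{two}. The proposition claims that the \emph{combination} (the sum that actually enters the reward function) can always be expressed through a single \emph{one} of the three, and your own computation shows this fails for the signed quantities: with $\mathrm{EOD}=e$, $\mathrm{AOD}=\tfrac12(e+f)$, $\mathrm{ERD}=f-e$, the sum is $\tfrac12 e+\tfrac32 f$, which is not an affine function of any single one of the three. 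Linear dependence gives a two-dimensional span; the theorem asserts a collapse to one dimension, piecewise.

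The missing ingredient is that the combination in question is built from the \emph{processed} metrics of Section~\ref{sec3.4}, namely $EOD^{\prime}=1-|b|$, $AOD^{\prime}=1-\tfrac12|a-b|$, $ERD^{\prime}=1-\tfrac12|a+b|$ with $a=FPR_u-FPR_p$ and $b=FNR_u-FNR_p$. The collapse to a single metric comes from the absolute values, via the identity $|a+b|+|a-b|=2\max(|a|,|b|)$ (equivalently, the case analysis of Table~\ref{table2} on the signs and relative magnitudes of $a$ and $b$): on $|a|\le|b|$ the sum equals $1+2EOD^{\prime}$, on $|a|\ge|b|$ with $a,b$ of equal sign it equals $1+2ERD^{\prime}$, and on $|a|\ge|b|$ with opposite signs it equals $1+2AOD^{\prime}$. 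Your write-up never touches the normalization or the absolute values, so the degeneration you assert in the final paragraph does not follow from the relation you derived. To repair the proof, keep your reduction to $(a,b)$ (or $(e,f)$) but then carry out the sign/magnitude case split on the absolute-value expressions.
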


\begin{proof}

Metrics AOD, EOD, and ERD are given by Eqs.~\ref{fair3}-\ref{fair4} respectively, where their terms can be calculated according to Table~\ref{table1}.

All three metrics are non-monotonic metrics and are symmetrical about zero. Thus, they can be treated with Eq.~\ref{eq7}. The processed metrics are presented in Eqs.~\ref{eqeod}, \ref{eq15}-\ref{eq16}.
\begin{equation}
AOD^{\prime}=1-\frac{1}{2}\left|\left(FPR_u-FPR_p\right)+\left(TPR_u-TPR_p\right)\right| 
\label{eq15}
\end{equation}  
\begin{equation}
ERD^{\prime}=1-\frac{1}{2}\left|\left(FPR_u+FNR_u\right)-\left(FPR_P+FNR_P\right)\right|
\label{eq16}
\end{equation}  

Let $a=FPR_u-FPR_p$, $b=FNR_u-FNR_p$, and $c=TPR_u-TPR_p$ ($a,b,c\in [-1,1]$), and the processed metrics can be readdressed as Eqs.~\ref{eq17}-\ref{eq19}.
\begin{equation}
 EOD^{\prime}=1-|c|
\label{eq17}
\end{equation}  
\begin{equation}
 AOD^{\prime}=1-\frac{1}{2}|a+c|
\label{eq18}
\end{equation} 
\begin{equation}
 ERD^{\prime}=1-\frac{1}{2}|a+b|
\label{eq19}
\end{equation}  

According to Table~\ref{table1}, there is Eq.~\ref{eq20}, or in other words, $c=-b$. While variables $a$ and $b$ are mutually independent. Substitute Eq.~\ref{eq20} into Eqs.~\ref{eq17}-\ref{eq19}, and Eqs.~\ref{eq21}-\ref{eq23} could be obtained.
\begin{equation}
b+c=\left(F N R_u+T P R_u\right)-\left(F N R_p+T P R_p\right)=1-1=0
\label{eq20}
\end{equation}  
\begin{equation}
E O D^{\prime}=1-|b|
\label{eq21}
\end{equation}  
\begin{equation}
A O D^{\prime}=1-\frac{1}{2}|a-b|
\label{eq22}
\end{equation}  
\begin{equation}
E R D^{\prime}=1-\frac{1}{2}|a+b|
\label{eq23}
\end{equation}  

The correlations of $EOD^{\prime}$, $AOD^{\prime}$, and $ERD^{\prime}$ are then discussed on different domains of $a$ and $b$ based on Eqs.~\ref{eq21}-\ref{eq23}, as shown in Table~\ref{table2}.

\begin{table}[!ht]
    \caption{
    \centering
    Correlations of $EOD^{\prime}$, $AOD^{\prime}$, and $ERD^{\prime}$ on different domains}
    \begin{tabular}{p{8.5cm}}
        \includegraphics[width=0.45\textwidth]{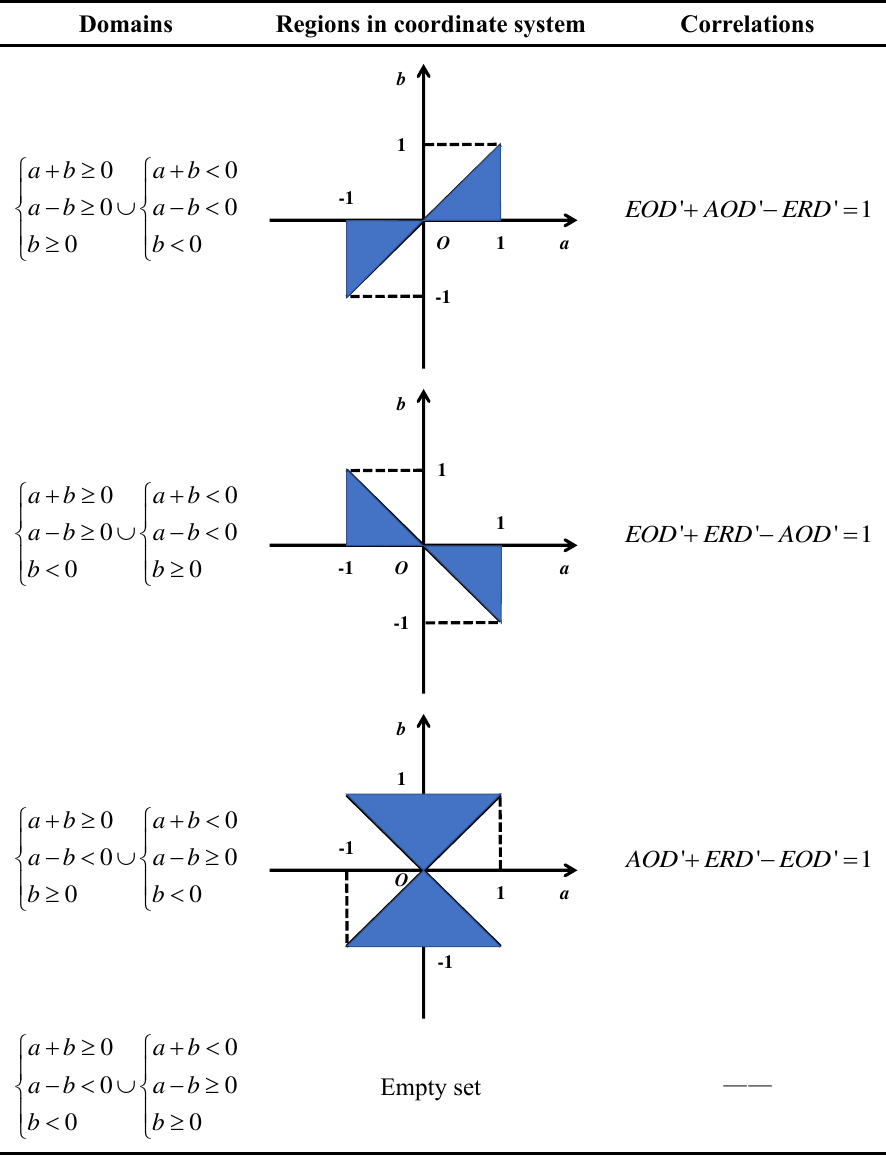}\\
    \textbf{Note}: The shaded areas in the second column are the domains where the correlations hold.
    \end{tabular}
    \label{table2}
\end{table}

Therefore, the sum of $EOD^{\prime}$, $AOD^{\prime}$, and $ERD^{\prime}$ always satisfies one of the three equations of Eqs.~\ref{eq24}-\ref{eq26} depending on different domains of $a$ and $b$. That means the combination of the three metrics can always be expressed with a certain one metric and cannot play a combined role as expected (as expressed by Eq.~\ref{eq27}, in which $X_i$  represents any one of the three metrics).
\begin{equation}
EOD^\prime+AOD^\prime+ERD^\prime=1+2ERD^\prime
\label{eq24}
\end{equation}  
\begin{equation}
EOD^\prime+AOD^\prime+ERD^\prime=1+2AOD^\prime
\label{eq25}
\end{equation}  
\begin{equation}
EOD^\prime+AOD^\prime+ERD^\prime=1+2EOD^\prime
\label{eq26}
\end{equation}  
\begin{equation}
EOD^\prime+AOD^\prime+ERD^\prime=1+2X_i
\label{eq27}
\end{equation}  
\end{proof}

In this case, two new metrics named by $m_a$ and $m_b$ are developed herein to eliminate the negative effects of direct combination, as shown in Eq.~\ref{eq28}-\ref{eq29}. According to the definitions of metric monotonicity, the two metrics are non-monotonic and can be processed as Eq.~\ref{eq30}-\ref{eq31} respectively. Since $M_a$ and $M_b$ are mutually independent and have no negative coupling effects, they are expected to be able to achieve better results compared with the direct combination of $EOD^{\prime}$, $AOD^{\prime}$, and $ERD^{\prime}$.
\begin{equation}
m_a=a=FPR_u-FPR_p
\label{eq28}
\end{equation}  
\begin{equation}
m_b=b=FPR_u-FPR_p
\label{eq29}
\end{equation}  
\begin{equation}
M_a=1-|a|=1-|FPR_u-FPR_p|
\label{eq30}
\end{equation}  
\begin{equation}
M_b=1-|b|=1-|FNR_u-FNR_p|
\label{eq31}
\end{equation} 

So, can we say the combination of $EOD^{\prime}$, $AOD^{\prime}$, and $ERD^{\prime}$ is actually good-for-nothing? The answer is no. Although the combination can always be expressed with a certain one of the three metrics, it cannot be always expressed with a fixed one. In other words, the metrics expressing the combination can be different during an experiment. With variables $a$ and $b$ falling on different domains, as illustrated in Table~\ref{table2}, $X_i$ in Eq.~\ref{eq27} can change from one metric to another. In this sense, we can still say the combination has covered all the three metrics, merely the metrics are covered one by one, rather than simultaneously. Therefore, the combination of $EOD^{\prime}$, $AOD^{\prime}$, and $ERD^{\prime}$ can still outperform any one of the three metrics that acts as an individual, even if it may not match the newly developed combination of $M_a$ and $M_b$.

\section{Experiment Setup}\label{chap:exper}
\subsection{Datasets}
We conduct experiments on 4 commonly used datasets in fairness-related research \cite{uclpri, uclcom, white, ase1, uclase, aseaut}, namely Adult \cite{adult}, Compas \cite{compas}, German \cite{german} and Bank \cite{bank}, in which Adult and Compas have two sensitive attributes. In each task, only one sensitive attribute is considered, leading to 6 tasks in total (e.g., Compas-Sex, Compas-Race, etc.). 
Table~\ref{tabledata} 
provides an overview of these datasets, where Sen. Att., and Fav. Lab. are the abbreviations for sensitive attributes, and favorable labels respectively. Detailed descriptions of these datasets are given below. 
\begin{itemize}
\item \textbf{Adult} aims to predict whether an individual's income exceeds 50 thousand dollars per year based on  personal finance and demographic information.
\item \textbf{Compas} aims to predict whether a criminal offender would recidivate within two years based on criminal history and demographic information of offenders.
\item \textbf{German} aims to predict whether an individual applying for a loan has the ability to repay it based on customer demographics and financial indicators.
\item \textbf{Bank} aims to predict whether a customer would subscribe to a term deposit based on the information about the marketing campaign of a Portuguese bank. 
\end{itemize}

\begin{table}[!ht]
    \centering
    \caption{Dataset Information}
    \begin{tabular}{cccccc}
    \hline
        \textbf{Dataset} & \textbf{Size}  & \textbf{Sen. Att.} & \textbf{Privileged } & \textbf{Fav. Lab.} \\ \hline
        \textbf{Adult} & 48,842  & sex, race & male, white  & income$>$50K\\ 
        \textbf{Compas} & 7,214  & sex, race & female, white  & no recidivism\\ 
        \textbf{German} & 1,000  & sex & male & good credit\\ 
        \textbf{Bank} & 30,488  & age &  age$>$25 & yes\\
       \hline
    \end{tabular}
    \label{tabledata}
\end{table}

\subsection{Bias Mitigation Methods}
We compare \CFU~with 4 existing bias mitigation methods, including Disparate Impact Remover (\DIR) \cite{DI}, Adversarial Debiasing (\ADV) \cite{adv}, Reject Option Classification (\ROC) \cite{roc}, and \GetFair~\cite{getfair}. \DIR, \ADV~and \ROC~are widely used methods \cite{maat, cct31}, and \GetFair~is one of the most recent methods. 
\begin{itemize}
\item  \textbf{DIR} is a pre-processing method based on the disparate impact metric. It modifies the values of the non-sensitive attribute to remove the bias from the training dataset.
\item \textbf{ADV} is an in-processing method based on adversarial techniques. The discriminator learns the difference between sensitive and other attributes to mitigate the bias.
\item \textbf{ROC} is a post-processing method that assigns favorable (unfavorable) labels to the unprivileged (privileged) groups.
\item \textbf{GetFair} is a reinforcement learning-based method. Its agent can output the action to update the parameters of the classifier. The reward function of \GetFair~uses single fairness metric (SPD, EOD or AOD).
\end{itemize}

\subsection{Metrics}
\subsubsection{Utility Metrics} 
We measure the utility of the model for a given task using 3 classical classification metrics, including accuracy (ACC), F1-Score (F1) and area under the ROC curve (AUC). Accuracy calculates the proportion of samples that are correctly classified out of all classification samples. The F1 score is a weighted harmonic mean of precision and recall. 
The ROC curve is formed by the points with false positive rate (FPR) as the horizontal axis and true positive rate (TPR) as the vertical axis.
AUC measures the quality of the predictions across different probability thresholds.

\subsubsection{Fairness Metrics} We measure the fairness of the model using 5 metrics described in Section~\ref{sec:3.1.2}, including DI, SPD, AOD, EOD, and ERD. The definitions are given by Eqs.~\ref{fair5}-\ref{fair4}.

The metrics are processed to be monotonically increasing with the rage of [0, 1] based on the manipulation described in Section~\ref{sec3.4}.
The three utility metrics and DI have already satisfied the requirement. 
SPD, EOD, AOD and ERD are processed to $SPD^{\prime}$, $EOD^{\prime}$, $AOD^{\prime}$, and $ERD^{\prime}$ respectively as shown in Eqs.~\ref{eqrspd}, \ref{eqeod}, \ref{eq15}-\ref{eq16}.

\subsection{Trade-off Measurement}
Fairea \cite{fairea} is a benchmarking tool for measuring the fairness-utility trade-off of bias mitigation methods, and has been applied in related research \cite{maat}. Fairea constructs a trade-off baseline based on the utility and fairness of the original model and a set of pseudo-models created by randomly mutating model predictions. 
As illustrated in Fig.~\ref{fig:fairea}, the baseline is constructed by plotting the utility of the Fairea models on the horizontal axis and the fairness on the vertical axis.
The plane is divided into 5 regions with different trade-off effectiveness levels based on the baseline,
namely \textit{win-win}, \textit{good}, \textit{inverted}, \textit{bad}, and \textit{lose-lose}. 
The point of a model falls in \winwin~if its performance and fairness are both better than the baseline. If it is worse on either, it belongs to \loselose. 
If it has better utility but worse fairness, it belongs to \inverted. 
If fairness is improved with a utility loss, there are two possibilities: 
If the trade-off of the model is better than the baseline, it belongs to \good; otherwise, it belongs to \bad.
The bias mitigation method can achieve a better fairness-utility trade-off when its processed model falls within the \winwin~more frequently.

$Example$ If 150 points are obtained from the processed model with 75 falling in \winwin~and 15 falling in \good, then the proportion of \winwin~is 50\% and \good~is 10\%. The same manipulation is performed for each combination of fairness metric (as horizontal axis) and utility metric (as vertical axis), and the average proportions of each region are calculated and taken as the final result.
\begin{figure}[htbp]
  \centering
  \includegraphics[width=0.25\textwidth]{./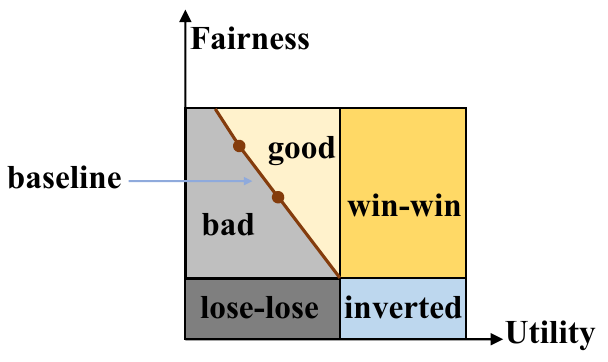}
  \caption{Trade-off Baseline Fairea}
  \label{fig:fairea}
\end{figure}


\subsection{Implementation}
The default reward function of \CFU~is $0.5(M_a+M_b)+0.5AUC$.
Four datasets and three methods including \DIR, \ADV~and \ROC~are used by directly invoking off-the-shelf APIs provided by IBM AIF360 toolkit \cite{aif360}.
We apply \GetFair~method based on the code released by its authors \cite{getfaircode}, and adopt Fairea based on the code of related research MAAT \cite{maatcode}.
For each task, we train the original models using 3 ML algorithms including Neural Network (NN), Logistic Regression (LR), and Support Vector Machine (SVM), which are widely adopted in fairness literature \cite{cct31, ruler, maat, cct6, sun3, sun4}. 
Following previous work \cite{cct31}, our NN contains five hidden layers, each consisting of 64, 32, 16, 8, and 4 units respectively. For SVM and LR \cite{maat}, we employed the default configuration provided by the scikit-learn library \cite{scikitlearn}.

\subsection{Research Question}
We conduct experiments to answer the following questions:

\textbf{RQ1: How effective and efficient is \CFU~in improving fairness-utility trade-off?} 
This research question evaluates the performance of \CFU~in improving 15 fairness-utility trade-offs of 3 ML algorithm classifiers on 6 tasks.
We illustrate the fairness and utility of processed models and analyze their trade-off effectiveness levels based on the Fairea benchmarking tool.

\textbf{RQ2: What about the effectiveness of the newly proposed fairness metrics $M_a$ and $M_b$?} To address this question, ablation experiments are conducted on different combinations of fairness and utility metrics, and Fairea is employed to evaluate their effectiveness. 


\section{Results and Discussion}\label{chap:resul}
\subsection{RQ1: Effectiveness and Efficiency}\label{sec:rq1}

To demonstrate the effectiveness of \CFU, we conduct experiments on \CFU~and 4 existing methods across 3 algorithms (LR, SVM, NN) and 6 tasks.
Each experiment is repeated 10 times, and a total of $6 \times 3 \times 6 \times 10 = 1080$ experiments are performed. 

The results are analyzed in 3 steps. At first, the fairness-utility trade-off of bias mitigation methods on all algorithms and tasks is evaluated by Fairea. Afterward, we present the results on different ML algorithms and different tasks. Finally, the fairness and utility of the original model and models processed by different methods on a specific algorithm and task are presented. 

\textbf{Effectiveness.} As shown in Figure~\ref{fig:rq1}, \CFU~(54.8\%) have achieved the highest proportion for \winwin~cases. \ROC~(8.9\%), \GetFair~(33\%), \DIR~(21.5\%), and \ADV~(5.9\%)~have decreased by 45.9\%, 21.8\%, 33.3\% and 48.9\% in proportions respectively, with an average of 37.5\% compared with \CFU. This shows that \CFU~can effectively improve the fairness-utility trade-off for ML classifiers.

\begin{figure}[htbp]
  \centering
  \includegraphics[width=0.47\textwidth]{./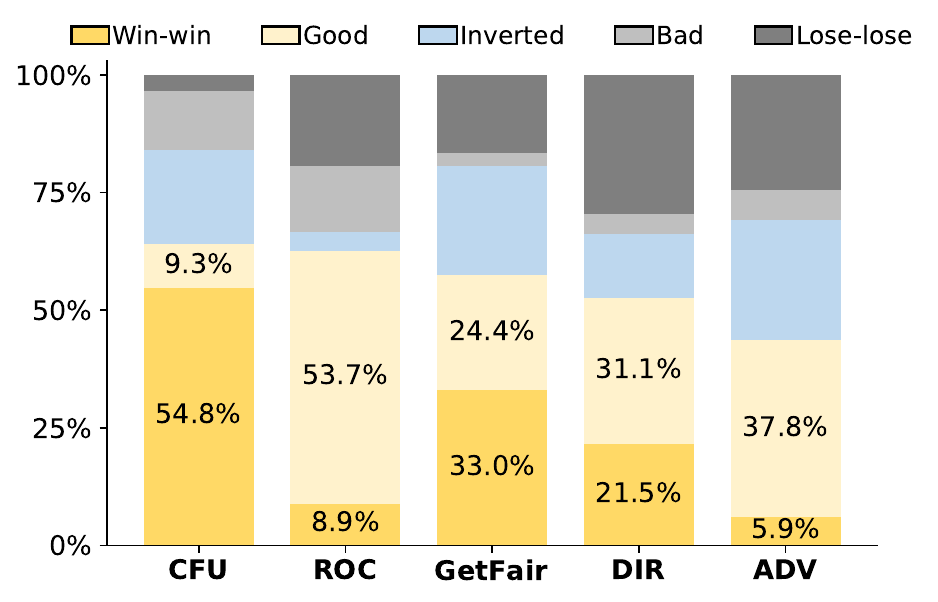}
  \caption{(RQ1) Distributions of trade-off effectiveness levels for different bias mitigation methods.
}
  \label{fig:rq1}
\end{figure}

\begin{figure}
  \centering
  \begin{subfigure}[b]{0.47\textwidth}
    \includegraphics[width=\textwidth]{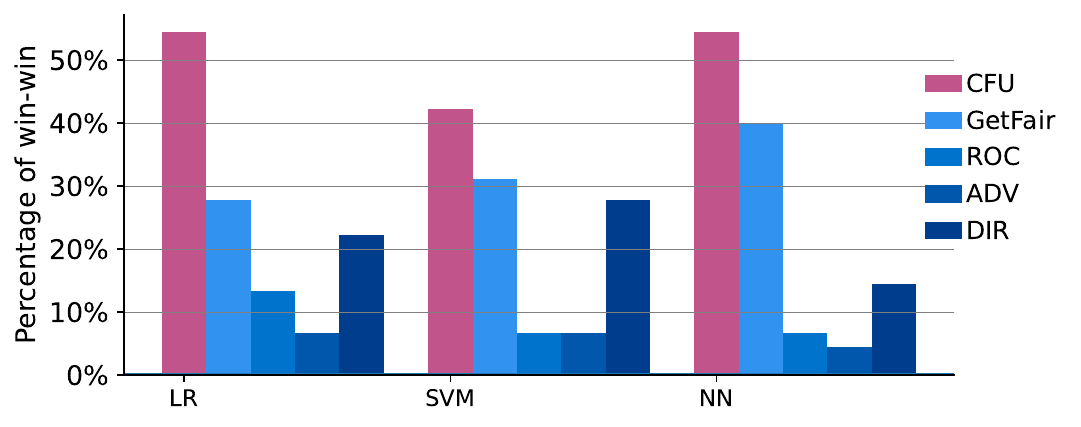}
    \caption{Different ML algorithms.}
    \label{fig:rq2-model}
  \end{subfigure}

  \begin{subfigure}[b]{0.47\textwidth}
    \includegraphics[width=\textwidth]{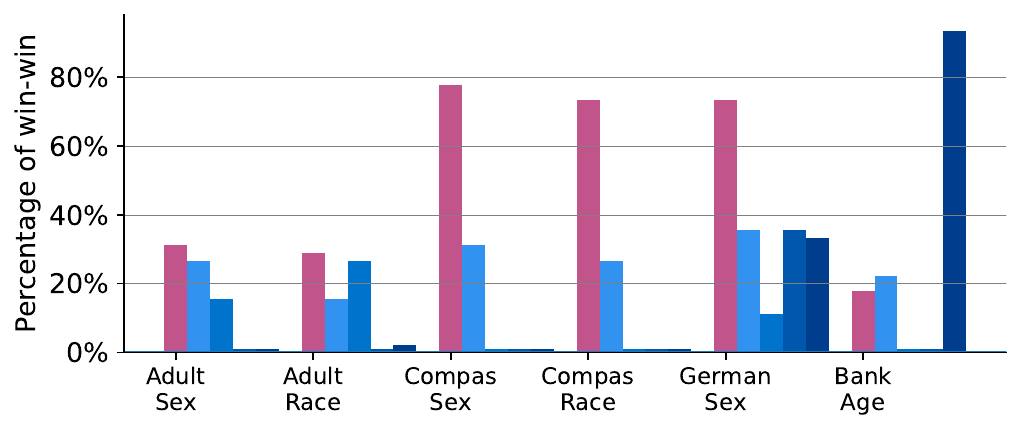}
    \caption{Different tasks.}
    \label{fig:rq2-data}
  \end{subfigure}

  \caption{(RQ1) Trade-off effectiveness of different bias mitigation methods with respect to 3 different algorithms and 6 different tasks.
  }
  \label{fig:rq2}
\end{figure}
\begin{figure*}[htbp]
  \centering
  \includegraphics[width=1.0\textwidth]{./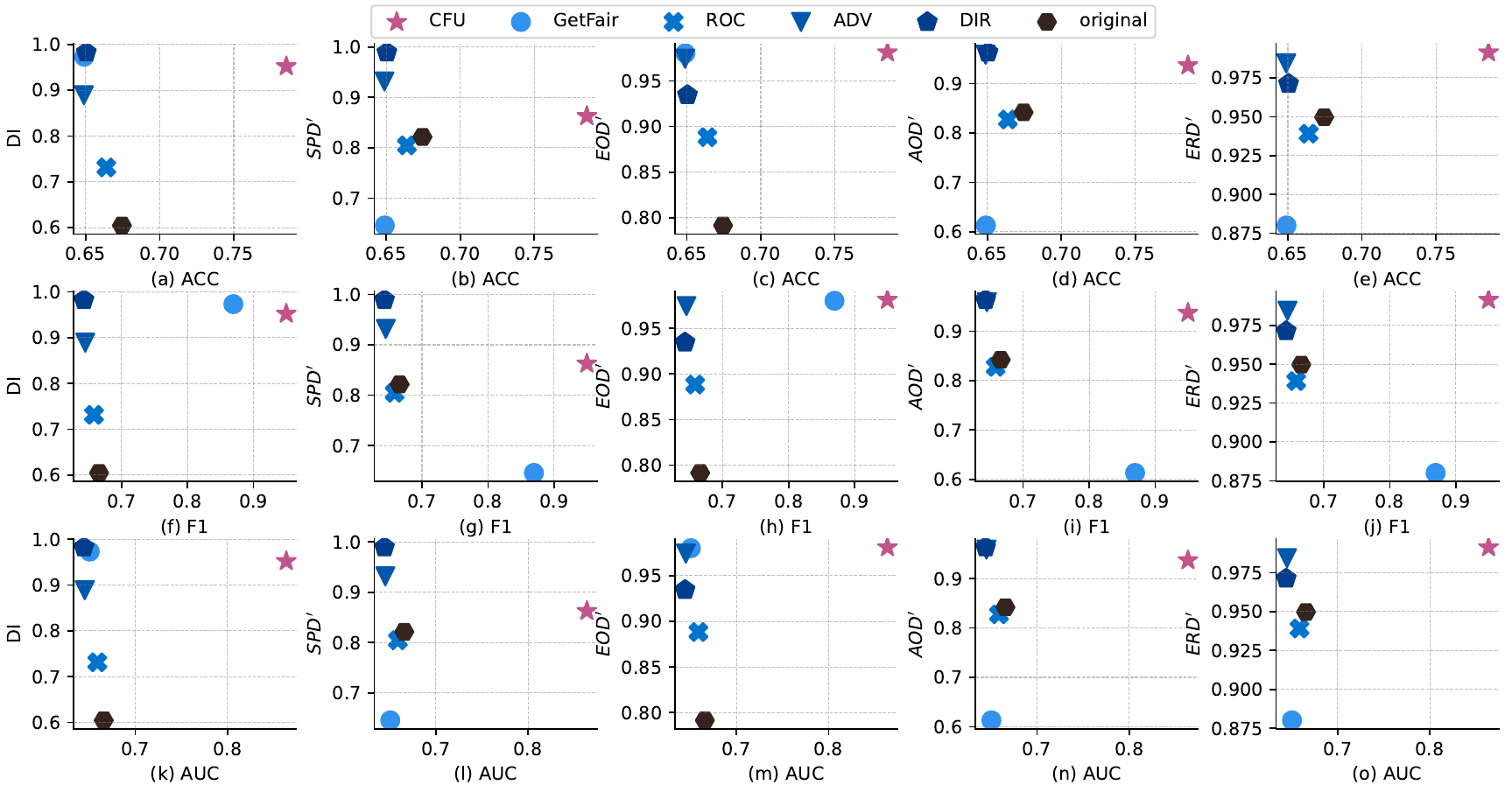}
  \caption{(RQ1) 
The fairness-utility trade-off of different methods on LR and Compas_Race. 
The original points correspond to the original LR model, while the points of bias mitigation methods correspond to the models processed by these methods. 
It can be seen that the fairness and utility of the model processed by \CFU~have been improved across various metrics.}
  \label{fig:rq1.2}
\end{figure*}
\textbf{Generality and Stability.} The results on different ML algorithms are presented in Figure~\ref{fig:rq2} (a). It can be found that \CFU~achieves the highest proportion of cases falling in \winwin~for all algorithms, with 54\% for LR, 42\% for SVM, and 54\% for NN. The results on different tasks are illustrated in Figure~\ref{fig:rq2} (b). For \CFU, it outperforms all other methods in most tasks, except for Bank-Age. In addition, the stability of \CFU~is also investigated. Compared with other methods, \CFU~has exhibited much higher stability. The proportions of cases falling in \winwin~for \CFU~are all higher than 17\% regardless of the tasks. While for other methods, much greater fluctuations in such proportion can be observed. For instance, \ROC~has more than 15\% cases falling in \winwin~on Adult-Sex and Adult-Race, but such proportion drops to 0\% on Bank-Age. The case is worse for \DIR, which has 100\% cases falling in \winwin~on Bank-Age but 0\% on Adult-Sex, Compas-Sex, and Compas-Race. In contrast, \CFU~has at least 31\% cases falling in \winwin~on these 3 tasks while still reaching 18\% on Bank-Age. Therefore, \CFU~is effective in improving the fairness-utility trade-off of various ML algorithms on most tasks and is more stable than existing methods.

\textbf{Efficiency.} 
The scatter diagram of the original model and models processed by 5 bias mitigation methods are plotted in Figure~\ref{fig:rq1.2},
with the utility value of the model as abscissa and the fairness value as ordinate. 
15 subgraphs have a different coordinate axis, which is a combination of utility metric and fairness metric. Due to limited space, only the results for the LR algorithm on Compas-Race task are presented, and those for other tasks are similar. Almost all the points for \CFU~are located at the upper right of the points for the original model, indicating that the model processed by \CFU~has better fairness and utility and can achieve a better fairness-utility trade-off. Moreover, \CFU~works well for all fairness metrics. The model processed by \CFU~reaches ((c), (h), (m), (e), (j), and (o) subgraphs) or close to (other subgraphs) the highest fairness scores of all models. In contrast, most points for other bias mitigation methods are located in the upper-left of the points for the original model, indicating that they have sacrificed utility while improving fairness. 
In addition, they also restricted to improving only one specific fairness notion.
For instance, \DIR~is good at improving statistical parity, with the highest score on both DI and SPD metrics, while \GetFair~and \ADV~work well at equal opportunity and equalized odds respectively. 
This suggests that \CFU~can efficiently improve the fairness-utility trade-off across multiple metrics.

\begin{answerbox}
\textbf{Answer to RQ1:} 
\CFU~is superior and has outperformed existing methods in improving fairness-utility trade-off across multiple metrics.
In addition, it has superior generality and stability over existing methods on various ML algorithms and tasks.
\end{answerbox}

\begin{table}[]
\caption{\centering(RQ2) 
Effectiveness of \CFU~when considering different metrics in reward function.}
\renewcommand{\arraystretch}{1.25}
		\centering
  \LARGE
		\resizebox{0.5\textwidth}{!}{
		\begin{tabular}{llllllllll|c}
			\hline
			\multicolumn{10}{c|}{\textbf{Metrics used in reward function}} & \multirow{2}{*}{\winwin}\\
			\cline{1-10}
			$M_a$ &$M_b$ &DI &SPD &EOD &AOD &ERD &ACC &F1& AUC & \\
      	\hline
			\hline 
                + &+ & & & & & & & &+ & 54.81(100.0\%)\\
 &+ & & & & & & & &+ & 54.44(99.32\%)\\
+ & & & & & & &+ & &+ & 53.7(97.97\%)\\
 &+ & & & & & &+ & &+ & 50.37(91.89\%)\\
+ & & & & & & & & &+ & 50.37(91.89\%)\\
+ &+ & & & & & &+ & & & 46.3(84.46\%)\\
 &+ & & & & & &+ & & & 45.93(83.78\%)\\
+ &+ & & & & & &+ & &+ & 44.44(81.08\%)\\
+ & & & & & & &+ & & & 41.11(75.0\%)\\
 & &+ &+ & & & & & &+ & 40.0(72.97\%)\\
 & &+ &+ &+ &+ &+ &+ & &+ & 37.04(67.57\%)\\
 & & &+ &+ &+ &+ &+ & &+ & 36.3(66.22\%)\\
 & & &+ &+ &+ &+ & & &+ & 35.56(64.86\%)\\
 & & &+ & & & &+ & &+ & 34.44(62.84\%)\\
 & &+ & &+ &+ &+ &+ & &+ & 34.44(62.84\%)\\
 & &+ &+ & & & &+ & &+ & 34.07(62.16\%)\\
 & &+ & &+ &+ &+ & & &+ & 34.07(62.16\%)\\
 & & & & &+ & & & &+ & 33.7(61.49\%)\\
 & & &+ & & & & & &+ & 33.7(61.49\%)\\
 & & & &+ &+ &+ &+ & &+ & 33.7(61.49\%)\\
 & &+ &+ &+ &+ &+ & & &+ & 33.33(60.81\%)\\
 & & &+ & & & &+ & & & 32.96(60.14\%)\\
 & &+ & & & & & & &+ & 32.22(58.78\%)\\
 & & & & & &+ & & &+ & 31.85(58.11\%)\\
 & & &+ &+ &+ &+ &+ & & & 30.74(56.08\%)\\
 & &+ & & & & &+ & &+ & 30.37(55.41\%)\\
 & & & &+ & & & & &+ & 30.37(55.41\%)\\
 & &+ &+ & & & &+ & & & 30.0(54.73\%)\\
 & &+ & & & & &+ & & & 30.0(54.73\%)\\
 & & & & &+ & &+ & & & 29.26(53.38\%)\\
 & &+ & &+ &+ &+ &+ & & & 29.26(53.38\%)\\
 & & & &+ & & &+ & & & 26.3(47.97\%)\\
 & &+ &+ &+ &+ &+ &+ & & & 25.56(46.62\%)\\
 & & & & & &+ &+ & & & 24.81(45.27\%)\\
 & &+ & & & & & &+ & & 23.7(43.24\%)\\
 & & & &+ & & & &+ & & 23.7(43.24\%)\\
 & & &+ & & & & &+ & & 23.7(43.24\%)\\
 & & & & &+ & & &+ & & 18.15(33.11\%)\\
 & & & & & &+ & &+ & & 17.41(31.76\%)\\
			\hline
   \addlinespace[1.5ex]
   \multicolumn{11}{p{22cm}}{\Huge\textbf{Note:} + means that the metric is used.
   In the last column, values outside the brackets indicate the percentage of \winwin~level, while those in the brackets indicate the ratio of the outside value to the best one. The table is sorted in descending order of the percentage.}
		\end{tabular}}
		\label{tab:ablation}
	\end{table}

\subsection{RQ2: Effectiveness of New Metrics}\label{sec:rq2}
To demonstrate the effectiveness of newly developed fairness metrics $M_a$ and $M_b$, ablation experiments are conducted herein. Overall, we test numerous reward functions, which are constructed with different metrics. The results are evaluated by Fairea and the proportions of \winwin~cases are recorded, as presented in Table~\ref{tab:ablation}. 

We use 10 metrics, including our proposed 2 fairness metrics, 5 existing fairness metrics (SPD, DI, EOD, AOD, and ERD), and 3 utility metrics (ACC, F1, and AUC). The ablation experiments consist of 3 groups. 
Firstly, we employ 1 existing fairness metric paired with 1 utility metric to form a total of 15 combinations ($3 \times 5$). 
Secondly, we bundle EOD, AOD, and ERD fairness metrics together (select all or none) and name them as Triplet. 
Using 3 fairness metrics (SPD, DI, Triplet) and 2 utility metrics (ACC, AUC), we explore a total of 15 combinations ($(C_3^1 + C_3^2 + C_3^3) \times (C_2^1+C_2^2)-C_3^1*C_2^1$).
Finally, we combine our two new metrics with 2 utility metrics (ACC, AUC) to form a total of 9 combinations ($(C_2^1+C_2^2) \times (C_2^1+C_2^2)$). 
We use each combination to construct \CFU~and do 10 repeated experiments on 3 algorithms and 6 tasks, resulting in a total of 7020 experiments ($(15 + 15 + 9) \times 3 \times 6 \times 10$). 


\textbf{The efficiency of $M_a$ and $M_b$.} The highest proportion of \winwin~cases (54.81\%) is achieved when both $M_a$ and $M_b$ are used as fairness metrics and AUC as performance metric (the default setting of reward function), while the combinations without $M_a$ and $M_b$ seem to be not that effective: the \winwin~cases for combinations including $M_a$ or $M_b$ are all higher than those without them. The highest proportion of \winwin~cases that can be achieved without $M_a$ and $M_b$ is 40\%, around 72.75\% of default \CFU. On average, the cases falling in \winwin~for combinations using newly proposed metrics have been improved by 61.58\% compared with the combinations without using it. Therefore, it is concluded that $M_a$ and $M_b$ are superior in improving the trade-off between fairness and utility.

\textbf{Characteristics of existing metrics.} It can be found that the combinations containing F1 have all experienced a decrease in proportions of \winwin~cases. In contrast, those containing AUC can achieve higher proportions for \winwin~cases. These findings suggest that F1 is ineffective in \CFU, while AUC is an excellent utility metric. Additionally, Using EOD, AOD, and ERD simultaneously can still perform well in \winwin~cases even though it can not match default \CFU.

\textbf{The superiority of \CFU~comes from the comprehensive measurement and new metrics.} In RQ1, we can see that \GetFair~cannot compete with \CFU~when it comes to \winwin~cases. In the ablation experiments, \GetFair~has three kinds of combinations, including SPD+ACC, EOD+ACC, and AOD+ACC. The highest proportion of \winwin~cases that are achieved by these combinations is 32.96\%, only 60.14\% of default \CFU. Since the only difference between \CFU~and \GetFair~lies in the reward function, which has involved the comprehensive measurement and newly proposed metrics for \CFU, it is reasonable to believe that the comprehensive measurement and new metrics do have played a vital role.


\begin{answerbox}
\textbf{Answer to RQ2:} 
$M_a$ and $M_b$ are superior in improving the trade-off between fairness and utility. 
The best combination of \CFU~consists of $M_a$, $M_b$ and AUC, which has outperformed all other metric combinations.
\end{answerbox}

\section{Conclusion}\label{chap:concl}
In this paper, \CFU~framework is developed based on our comprehensive measurement.
New metrics are proposed for \CFU~so that a better trade-off between fairness and utility could be achieved.
We conduct extensive experiments on \CFU~and compare it with the existing bias mitigation techniques.
The conclusions are summarized as follows. 
\CFU~has realized  37.5\% improvement over the state-of-the-art method across all cases we evaluated. Moreover, it has a superior generality that outperforms other methods on all models and tasks.
Our new metrics can significantly improve the effect of fairness-utility trade-off, and an average increase of 61.58\% has been observed compared with combinations of existing metrics.
It should be noted that even though we focus on classification models, our comprehensive measurement and \CFU~framework could also be used in the improvement of all ML tasks for better fairness-utility trade-off.

\newpage

\normalem
\bibliographystyle{IEEEtran}
\bibliography{IEEEfull}

\begin{thebibliography}{10}
\providecommand{\url}[1]{#1}
\csname url@samestyle\endcsname
\providecommand{\newblock}{\relax}
\providecommand{\bibinfo}[2]{#2}
\providecommand{\BIBentrySTDinterwordspacing}{\spaceskip=0pt\relax}
\providecommand{\BIBentryALTinterwordstretchfactor}{4}
\providecommand{\BIBentryALTinterwordspacing}{\spaceskip=\fontdimen2\font plus
\BIBentryALTinterwordstretchfactor\fontdimen3\font minus
  \fontdimen4\font\relax}
\providecommand{\BIBforeignlanguage}[2]{{%
\expandafter\ifx\csname l@#1\endcsname\relax
\typeout{** WARNING: IEEEtran.bst: No hyphenation pattern has been}%
\typeout{** loaded for the language `#1'. Using the pattern for}%
\typeout{** the default language instead.}%
\else
\language=\csname l@#1\endcsname
\fi
#2}}
\providecommand{\BIBdecl}{\relax}
\BIBdecl

\bibitem{lcifr}
A.~Ruoss, M.~Balunovi\'{c}, M.~Fischer, and M.~Vechev, ``Learning certified
  individually fair representations,'' in \emph{Proceedings of the 34th
  International Conference on Neural Information Processing Systems}, ser.
  NIPS'20.\hskip 1em plus 0.5em minus 0.4em\relax Red Hook, NY, USA: Curran
  Associates Inc., 2020.

\bibitem{ricci2022addressing}
M.~A. Ricci~Lara, R.~Echeveste, and E.~Ferrante, ``Addressing fairness in
  artificial intelligence for medical imaging,'' \emph{nature communications},
  vol.~13, no.~1, p. 4581, 2022.

\bibitem{KHANDANI20102767}
\BIBentryALTinterwordspacing
A.~E. Khandani, A.~J. Kim, and A.~W. Lo, ``Consumer credit-risk models via
  machine-learning algorithms,'' \emph{Journal of Banking \& Finance}, vol.~34,
  no.~11, pp. 2767--2787, 2010. [Online]. Available:
  \url{https://www.sciencedirect.com/science/article/pii/S0378426610002372}
\BIBentrySTDinterwordspacing

\bibitem{mattuMachineBias}
J.~Angwin, J.~Larson, S.~Mattu, and L.~K. ProPublica, ``Machine bias: There's
  software used across the country to predict future criminals. and it's biased
  against blacks.''
  https://www.propublica.org/article/machine-bias-risk-assessments-in-criminal-sentencing,
  2016.

\bibitem{german}
\BIBentryALTinterwordspacing
D.~Dua and C.~Graff, ``{UCI} machine learning repository,'' 2017. [Online].
  Available: \url{http://archive.ics.uci.edu/ml}
\BIBentrySTDinterwordspacing

\bibitem{cct11}
P.~Li and H.~Liu, ``Achieving fairness at no utility cost via data reweighing
  with influence,'' in \emph{International Conference on Machine
  Learning}.\hskip 1em plus 0.5em minus 0.4em\relax PMLR, 2022, pp.
  12\,917--12\,930.

\bibitem{cct15}
T.~Zhang, T.~Zhu, J.~Li, M.~Han, W.~Zhou, and S.~Y. Philip, ``Fairness in
  semi-supervised learning: Unlabeled data help to reduce discrimination,''
  \emph{IEEE Transactions on Knowledge and Data Engineering}, vol.~34, no.~4,
  pp. 1763--1774, 2020.

\bibitem{cct20}
X.~Gao, J.~Zhai, S.~Ma, C.~Shen, Y.~Chen, and Q.~Wang, ``Fairneuron: improving
  deep neural network fairness with adversary games on selective neurons,'' in
  \emph{Proceedings of the 44th International Conference on Software
  Engineering}, 2022, pp. 921--933.

\bibitem{cct25}
A.~Petrovi{\'c}, M.~Nikoli{\'c}, S.~Radovanovi{\'c}, B.~Deliba{\v{s}}i{\'c},
  and M.~Jovanovi{\'c}, ``Fair: Fair adversarial instance re-weighting,''
  \emph{Neurocomputing}, vol. 476, pp. 14--37, 2022.

\bibitem{cct26}
H.~Anahideh, A.~Asudeh, and S.~Thirumuruganathan, ``Fair active learning,''
  \emph{Expert Systems with Applications}, vol. 199, p. 116981, 2022.

\bibitem{cct28}
K.~T. Rodolfa, H.~Lamba, and R.~Ghani, ``Empirical observation of negligible
  fairness--accuracy trade-offs in machine learning for public policy,''
  \emph{Nature Machine Intelligence}, vol.~3, no.~10, pp. 896--904, 2021.

\bibitem{cct29}
D.~Nguyen, S.~Gupta, S.~Rana, A.~Shilton, and S.~Venkatesh, ``Fairness
  improvement for black-box classifiers with gaussian process,''
  \emph{Information Sciences}, vol. 576, pp. 542--556, 2021.

\bibitem{cct30}
T.~Jang, P.~Shi, and X.~Wang, ``Group-aware threshold adaptation for fair
  classification,'' in \emph{Proceedings of the AAAI Conference on Artificial
  Intelligence}, vol.~36, no.~6, 2022, pp. 6988--6995.

\bibitem{impossible}
S.~A. Friedler, C.~Scheidegger, and S.~Venkatasubramanian, ``On the (im)
  possibility of fairness,'' \emph{arXiv preprint arXiv:1609.07236}, 2016.

\bibitem{aif360}
R.~K.~E. Bellamy, K.~Dey, M.~Hind, S.~C. Hoffman, S.~Houde, K.~Kannan,
  P.~Lohia, J.~Martino, S.~Mehta, A.~Mojsilović, S.~Nagar, K.~N. Ramamurthy,
  J.~Richards, D.~Saha, P.~Sattigeri, M.~Singh, K.~R. Varshney, and Y.~Zhang,
  ``Ai fairness 360: An extensible toolkit for detecting and mitigating
  algorithmic bias,'' \emph{IBM Journal of Research and Development}, vol.~63,
  no. 4/5, pp. 4:1--4:15, 2019.

\bibitem{DI}
M.~Feldman, S.~A. Friedler, J.~Moeller, C.~Scheidegger, and
  S.~Venkatasubramanian, ``Certifying and removing disparate impact,''
  \emph{arXiv: Machine Learning}, 2014.

\bibitem{AOD}
M.~Hardt, E.~Price, and N.~Srebro, ``Equality of opportunity in supervised
  learning,'' \emph{Advances in neural information processing systems},
  vol.~29, 2016.

\bibitem{cct5}
G.~Harrison, J.~Hanson, C.~Jacinto, J.~Ramirez, and B.~Ur, ``An empirical study
  on the perceived fairness of realistic, imperfect machine learning models,''
  in \emph{Proceedings of the 2020 conference on fairness, accountability, and
  transparency}, 2020, pp. 392--402.

\bibitem{cct31}
M.~Zhang and J.~Sun, ``Adaptive fairness improvement based on causality
  analysis,'' in \emph{Proceedings of the 30th ACM Joint European Software
  Engineering Conference and Symposium on the Foundations of Software
  Engineering}, 2022, pp. 6--17.

\bibitem{cct6}
S.~Biswas and H.~Rajan, ``Do the machine learning models on a crowd sourced
  platform exhibit bias? an empirical study on model fairness,'' in
  \emph{Proceedings of the 28th ACM joint meeting on European software
  engineering conference and symposium on the foundations of software
  engineering}, 2020, pp. 642--653.

\bibitem{cct3}
D.~Pessach and E.~Shmueli, ``A review on fairness in machine learning,''
  \emph{ACM Computing Surveys (CSUR)}, vol.~55, no.~3, pp. 1--44, 2022.

\bibitem{intro1}
R.~Berk, H.~Heidari, S.~Jabbari, M.~Kearns, and A.~Roth, ``Fairness in criminal
  justice risk assessments: The state of the art,'' 2017.

\bibitem{intro2}
\BIBentryALTinterwordspacing
S.~Corbett-Davies, E.~Pierson, A.~Feller, S.~Goel, and A.~Huq, ``Algorithmic
  decision making and the cost of fairness,'' in \emph{Proceedings of the 23rd
  ACM SIGKDD International Conference on Knowledge Discovery and Data Mining},
  ser. KDD '17.\hskip 1em plus 0.5em minus 0.4em\relax New York, NY, USA:
  Association for Computing Machinery, 2017, p. 797–806. [Online]. Available:
  \url{https://doi.org/10.1145/3097983.3098095}
\BIBentrySTDinterwordspacing

\bibitem{fairea}
M.~Hort, J.~M. Zhang, F.~Sarro, and M.~Harman, ``Fairea: A model behaviour
  mutation approach to benchmarking bias mitigation methods,'' in
  \emph{Proceedings of the 29th ACM Joint Meeting on European Software
  Engineering Conference and Symposium on the Foundations of Software
  Engineering}, ser. ESEC/FSE 2021.\hskip 1em plus 0.5em minus 0.4em\relax New
  York, NY, USA: Association for Computing Machinery, 2021, p. 994–1006.

\bibitem{intro3}
\BIBentryALTinterwordspacing
M.~Wick, s.~panda, and J.-B. Tristan, ``Unlocking fairness: a trade-off
  revisited,'' in \emph{Advances in Neural Information Processing Systems},
  H.~Wallach, H.~Larochelle, A.~Beygelzimer, F.~d\textquotesingle
  Alch\'{e}-Buc, E.~Fox, and R.~Garnett, Eds., vol.~32.\hskip 1em plus 0.5em
  minus 0.4em\relax Curran Associates, Inc., 2019. [Online]. Available:
  \url{https://proceedings.neurips.cc/paper_files/paper/2019/file/373e4c5d8edfa8b74fd4b6791d0cf6dc-Paper.pdf}
\BIBentrySTDinterwordspacing

\bibitem{getfair}
S.~Sikdar, F.~Lemmerich, and M.~Strohmaier, ``Getfair: Generalized fairness
  tuning of classification models,'' in \emph{2022 ACM Conference on Fairness,
  Accountability, and Transparency}, ser. FAccT '22.\hskip 1em plus 0.5em minus
  0.4em\relax New York, NY, USA: Association for Computing Machinery, 2022, p.
  289–299.

\bibitem{whyhow}
\BIBentryALTinterwordspacing
J.~Chakraborty, S.~Majumder, and T.~Menzies, ``Bias in machine learning
  software: Why? how? what to do?'' ser. ESEC/FSE 2021.\hskip 1em plus 0.5em
  minus 0.4em\relax New York, NY, USA: Association for Computing Machinery,
  2021, p. 429–440. [Online]. Available:
  \url{https://doi.org/10.1145/3468264.3468537}
\BIBentrySTDinterwordspacing

\bibitem{uclcom}
\BIBentryALTinterwordspacing
Z.~Chen, J.~M. Zhang, F.~Sarro, and M.~Harman, ``A comprehensive empirical
  study of bias mitigation methods for machine learning classifiers,''
  \emph{ACM Trans. Softw. Eng. Methodol.}, feb 2023, just Accepted. [Online].
  Available: \url{https://doi.org/10.1145/3583561}
\BIBentrySTDinterwordspacing

\bibitem{ucltest}
Z.~Chen, J.~M. Zhang, M.~Hort, F.~Sarro, and M.~Harman, ``Fairness testing: A
  comprehensive survey and analysis of trends,'' \emph{arXiv preprint
  arXiv:2207.10223}, 2022.

\bibitem{maat}
\BIBentryALTinterwordspacing
Z.~Chen, J.~M. Zhang, F.~Sarro, and M.~Harman, ``Maat: A novel ensemble
  approach to addressing fairness and performance bugs for machine learning
  software,'' in \emph{Proceedings of the 30th ACM Joint European Software
  Engineering Conference and Symposium on the Foundations of Software
  Engineering}, ser. ESEC/FSE 2022.\hskip 1em plus 0.5em minus 0.4em\relax New
  York, NY, USA: Association for Computing Machinery, 2022, p. 1122–1134.
  [Online]. Available: \url{https://doi.org/10.1145/3540250.3549093}
\BIBentrySTDinterwordspacing

\bibitem{SPD}
T.~Calders and S.~Verwer, ``Three naive bayes approaches for
  discrimination-free classification,'' \emph{Data Mining and Knowledge
  Discovery}, 2010.

\bibitem{ERD}
A.~Chouldechova, ``Fair prediction with disparate impact: A study of bias in
  recidivism prediction instruments,'' \emph{arXiv: Applications}, 2016.

\bibitem{cct7}
\BIBentryALTinterwordspacing
N.~Mehrabi, F.~Morstatter, N.~Saxena, K.~Lerman, and A.~Galstyan, ``A survey on
  bias and fairness in machine learning,'' \emph{ACM Comput. Surv.}, vol.~54,
  no.~6, jul 2021. [Online]. Available: \url{https://doi.org/10.1145/3457607}
\BIBentrySTDinterwordspacing

\bibitem{cct9}
S.~Biswas and H.~Rajan, ``Fair preprocessing: towards understanding
  compositional fairness of data transformers in machine learning pipeline,''
  in \emph{Proceedings of the 29th ACM Joint Meeting on European Software
  Engineering Conference and Symposium on the Foundations of Software
  Engineering}, 2021, pp. 981--993.

\bibitem{cct10}
F.~Kamiran and T.~Calders, ``Data preprocessing techniques for classification
  without discrimination,'' \emph{Knowledge and information systems}, vol.~33,
  no.~1, pp. 1--33, 2012.

\bibitem{cct12}
Y.~Li, L.~Meng, L.~Chen, L.~Yu, D.~Wu, Y.~Zhou, and B.~Xu, ``Training data
  debugging for the fairness of machine learning software,'' in
  \emph{Proceedings of the 44th International Conference on Software
  Engineering}, 2022, pp. 2215--2227.

\bibitem{cct13}
R.~Zemel, Y.~Wu, K.~Swersky, T.~Pitassi, and C.~Dwork, ``Learning fair
  representations,'' in \emph{International conference on machine
  learning}.\hskip 1em plus 0.5em minus 0.4em\relax PMLR, 2013, pp. 325--333.

\bibitem{cct14}
F.~Calmon, D.~Wei, B.~Vinzamuri, K.~Natesan~Ramamurthy, and K.~R. Varshney,
  ``Optimized pre-processing for discrimination prevention,'' \emph{Advances in
  neural information processing systems}, vol.~30, 2017.

\bibitem{cct16}
J.~M. Zhang and M.~Harman, ``" ignorance and prejudice" in software fairness,''
  in \emph{2021 IEEE/ACM 43rd International Conference on Software Engineering
  (ICSE)}.\hskip 1em plus 0.5em minus 0.4em\relax IEEE, 2021, pp. 1436--1447.

\bibitem{cct17}
N.~Rekabsaz, S.~Kopeinik, and M.~Schedl, ``Societal biases in retrieved
  contents: Measurement framework and adversarial mitigation of bert rankers,''
  in \emph{Proceedings of the 44th International ACM SIGIR Conference on
  Research and Development in Information Retrieval}, 2021, pp. 306--316.

\bibitem{cct18}
L.~E. Celis, A.~Mehrotra, and N.~Vishnoi, ``Fair classification with
  adversarial perturbations,'' \emph{Advances in Neural Information Processing
  Systems}, vol.~34, pp. 8158--8171, 2021.

\bibitem{cct19}
T.~Adel, I.~Valera, Z.~Ghahramani, and A.~Weller, ``One-network adversarial
  fairness,'' in \emph{Proceedings of the AAAI Conference on Artificial
  Intelligence}, vol.~33, no.~01, 2019, pp. 2412--2420.

\bibitem{cct21}
D.~Pessach and E.~Shmueli, ``Improving fairness of artificial intelligence
  algorithms in privileged-group selection bias data settings,'' \emph{Expert
  Systems with Applications}, vol. 185, p. 115667, 2021.

\bibitem{cct22}
T.~Zhang, T.~Zhu, M.~Han, F.~Chen, J.~Li, W.~Zhou, and P.~S. Yu, ``Fairness in
  graph-based semi-supervised learning,'' \emph{Knowledge and Information
  Systems}, pp. 1--28, 2022.

\bibitem{cct23}
\BIBentryALTinterwordspacing
S.~Tizpaz-Niari, A.~Kumar, G.~Tan, and A.~Trivedi, ``Fairness-aware
  configuration of machine learning libraries,'' in \emph{Proceedings of the
  44th International Conference on Software Engineering}, ser. ICSE '22.\hskip
  1em plus 0.5em minus 0.4em\relax New York, NY, USA: Association for Computing
  Machinery, 2022, p. 909–920. [Online]. Available:
  \url{https://doi.org/10.1145/3510003.3510202}
\BIBentrySTDinterwordspacing

\bibitem{cct24}
U.~Gohar, S.~Biswas, and H.~Rajan, ``Towards understanding fairness and its
  composition in ensemble machine learning,'' \emph{arXiv preprint
  arXiv:2212.04593}, 2022.

\bibitem{cct32}
B.~Sun, J.~Sun, L.~H. Pham, and J.~Shi, ``Causality-based neural network
  repair,'' in \emph{Proceedings of the 44th International Conference on
  Software Engineering}, 2022, pp. 338--349.

\bibitem{cct33}
J.~Zhang and E.~Bareinboim, ``Fairness in decision-making—the causal
  explanation formula,'' in \emph{Proceedings of the AAAI Conference on
  Artificial Intelligence}, vol.~32, no.~1, 2018.

\bibitem{cct34}
A.~Balashankar and A.~Lees, ``The need for transparent demographic group
  trade-offs in credit risk and income classification,'' in \emph{Information
  for a Better World: Shaping the Global Future: 17th International Conference,
  iConference 2022, Virtual Event, February 28--March 4, 2022, Proceedings,
  Part I}.\hskip 1em plus 0.5em minus 0.4em\relax Springer, 2022, pp. 344--354.

\bibitem{cct35}
Y.~Wang, X.~Wang, A.~Beutel, F.~Prost, J.~Chen, and E.~H. Chi, ``Understanding
  and improving fairness-accuracy trade-offs in multi-task learning,'' in
  \emph{Proceedings of the 27th ACM SIGKDD Conference on Knowledge Discovery \&
  Data Mining}, 2021, pp. 1748--1757.

\bibitem{cct36}
J.~Cho, G.~Hwang, and C.~Suh, ``A fair classifier using mutual information,''
  in \emph{2020 IEEE International Symposium on Information Theory
  (ISIT)}.\hskip 1em plus 0.5em minus 0.4em\relax IEEE, 2020, pp. 2521--2526.

\bibitem{cct37}
S.~Jabbari, M.~Joseph, M.~Kearns, J.~Morgenstern, and A.~Roth, ``Fairness in
  reinforcement learning,'' in \emph{International conference on machine
  learning}.\hskip 1em plus 0.5em minus 0.4em\relax PMLR, 2017, pp. 1617--1626.

\bibitem{cct38}
R.~Maestre, J.~Duque, A.~Rubio, and J.~Ar{\'e}valo, ``Reinforcement learning
  for fair dynamic pricing,'' in \emph{Intelligent Systems and Applications:
  Proceedings of the 2018 Intelligent Systems Conference (IntelliSys) Volume
  1}.\hskip 1em plus 0.5em minus 0.4em\relax Springer, 2019, pp. 120--135.

\bibitem{cct39}
J.~Haas, ``Moral gridworlds: A theoretical proposal for modeling artificial
  moral cognition,'' \emph{Minds and Machines}, vol.~30, no.~2, pp. 219--246,
  2020.

\bibitem{cct40}
L.~Basyoni, A.~Erbad, A.~Mohamed, and M.~Guizani, ``Qdrl: Qos-aware deep
  reinforcement learning approach for tor's circuit scheduling,'' \emph{IEEE
  Transactions on Network Science and Engineering}, vol.~9, no.~5, pp.
  3396--3410, 2022.

\bibitem{cct41}
W.~Xie and P.~Wu, ``Fairness testing of machine learning models using deep
  reinforcement learning,'' in \emph{2020 IEEE 19th International Conference on
  Trust, Security and Privacy in Computing and Communications
  (TrustCom)}.\hskip 1em plus 0.5em minus 0.4em\relax IEEE, 2020, pp. 121--128.

\bibitem{cct42}
A.~Petrovi{\'c}, M.~Nikoli{\'c}, M.~Jovanovi{\'c}, M.~Bijani{\'c}, and
  B.~Deliba{\v{s}}i{\'c}, ``Fair classification via monte carlo policy gradient
  method,'' \emph{Engineering Applications of Artificial Intelligence}, vol.
  104, p. 104398, 2021.

\bibitem{cct43}
Y.~Ge, X.~Zhao, L.~Yu, S.~Paul, D.~Hu, C.-C. Hsieh, and Y.~Zhang, ``Toward
  pareto efficient fairness-utility trade-off in recommendation through
  reinforcement learning,'' in \emph{Proceedings of the fifteenth ACM
  international conference on web search and data mining}, 2022, pp. 316--324.

\bibitem{cct44}
A.~Vardasbi, F.~Sarvi, and M.~de~Rijke, ``Probabilistic permutation graph
  search: Black-box optimization for fairness in ranking,'' in
  \emph{Proceedings of the 45th International ACM SIGIR Conference on Research
  and Development in Information Retrieval}, 2022, pp. 715--725.

\bibitem{FIGCPS}
S.~Zhang, S.~Liu, J.~Sun, Y.~Chen, W.~Huang, J.~Liu, J.~Liu, and J.~Hao,
  ``Figcps: Effective failure-inducing input generation for cyber-physical
  systems with deep reinforcement learning,'' in \emph{2021 36th IEEE/ACM
  International Conference on Automated Software Engineering (ASE)}, 2021, pp.
  555--567.

\bibitem{policygradient}
R.~S. Sutton, D.~McAllester, S.~Singh, and Y.~Mansour, ``Policy gradient
  methods for reinforcement learning with function approximation,'' in
  \emph{Advances in Neural Information Processing Systems}, S.~Solla, T.~Leen,
  and K.~M\"{u}ller, Eds., vol.~12.\hskip 1em plus 0.5em minus 0.4em\relax MIT
  Press, 1999.

\bibitem{uclpri}
\BIBentryALTinterwordspacing
M.~Hort and F.~Sarro, ``Privileged and unprivileged groups: An empirical study
  on the impact of the age attribute on fairness,'' in \emph{Proceedings of the
  2nd International Workshop on Equitable Data and Technology}, ser. FairWare
  '22.\hskip 1em plus 0.5em minus 0.4em\relax New York, NY, USA: Association
  for Computing Machinery, 2022, p. 17–24. [Online]. Available:
  \url{https://doi.org/10.1145/3524491.3527308}
\BIBentrySTDinterwordspacing

\bibitem{white}
\BIBentryALTinterwordspacing
P.~Zhang, J.~Wang, J.~Sun, G.~Dong, X.~Wang, X.~Wang, J.~S. Dong, and T.~Dai,
  ``White-box fairness testing through adversarial sampling,'' in
  \emph{Proceedings of the ACM/IEEE 42nd International Conference on Software
  Engineering}, ser. ICSE '20.\hskip 1em plus 0.5em minus 0.4em\relax New York,
  NY, USA: Association for Computing Machinery, 2020, p. 949–960. [Online].
  Available: \url{https://doi.org/10.1145/3377811.3380331}
\BIBentrySTDinterwordspacing

\bibitem{ase1}
\BIBentryALTinterwordspacing
J.~Chakraborty, K.~Peng, and T.~Menzies, ``Making fair ml software using
  trustworthy explanation,'' ser. ASE '20.\hskip 1em plus 0.5em minus
  0.4em\relax New York, NY, USA: Association for Computing Machinery, 2021, p.
  1229–1233. [Online]. Available:
  \url{https://doi.org/10.1145/3324884.3418932}
\BIBentrySTDinterwordspacing

\bibitem{uclase}
M.~Hort and F.~Sarro, ``Did you do your homework? raising awareness on software
  fairness and discrimination,'' in \emph{2021 36th IEEE/ACM International
  Conference on Automated Software Engineering (ASE)}, 2021, pp. 1322--1326.

\bibitem{aseaut}
\BIBentryALTinterwordspacing
S.~Udeshi, P.~Arora, and S.~Chattopadhyay, ``Automated directed fairness
  testing,'' in \emph{Proceedings of the 33rd ACM/IEEE International Conference
  on Automated Software Engineering}, ser. ASE '18.\hskip 1em plus 0.5em minus
  0.4em\relax New York, NY, USA: Association for Computing Machinery, 2018, p.
  98–108. [Online]. Available: \url{https://doi.org/10.1145/3238147.3238165}
\BIBentrySTDinterwordspacing

\bibitem{adult}
\BIBentryALTinterwordspacing
D.~Dua and C.~Graff, ``{UCI} machine learning repository,'' 2017. [Online].
  Available: \url{http://archive.ics.uci.edu/ml}
\BIBentrySTDinterwordspacing

\bibitem{compas}
J.~Larson, S.~Mattu, L.~Kirchner, and J.~Angwin, ``How we analyzed the compas
  recidivism algorithm,''
  \url{https://www.propublica.org/article/how-we-analyzed-the-compas-recidivism-algorithm},
  2016, accessed: February 17, 2023.

\bibitem{bank}
S.~Moro, P.~Cortez, and P.~Rita, ``A data-driven approach to predict the
  success of bank telemarketing,'' \emph{Decision Support Systems}, vol.~62,
  pp. 22--31, 2014.

\bibitem{adv}
\BIBentryALTinterwordspacing
B.~H. Zhang, B.~Lemoine, and M.~Mitchell, ``Mitigating unwanted biases with
  adversarial learning,'' in \emph{Proceedings of the 2018 AAAI/ACM Conference
  on AI, Ethics, and Society}, ser. AIES '18.\hskip 1em plus 0.5em minus
  0.4em\relax New York, NY, USA: Association for Computing Machinery, 2018, p.
  335–340. [Online]. Available: \url{https://doi.org/10.1145/3278721.3278779}
\BIBentrySTDinterwordspacing

\bibitem{roc}
F.~Kamiran, A.~Karim, and X.~Zhang, ``Decision theory for discrimination-aware
  classification,'' in \emph{2012 IEEE 12th International Conference on Data
  Mining}, 2012, pp. 924--929.

\bibitem{getfaircode}
Sandipan99, ``Getfair,'' \url{https://github.com/Sandipan99/GetFair}, Jul 2022,
  accessed: April 20, 2023.

\bibitem{maatcode}
chenzhenpeng18, ``Maat,'' \url{https://github.com/chenzhenpeng18/FSE22-MAAT},
  Oct 2022, accessed: April 20, 2023.

\bibitem{ruler}
\BIBentryALTinterwordspacing
G.~Tao, W.~Sun, T.~Han, C.~Fang, and X.~Zhang, ``Ruler: Discriminative and
  iterative adversarial training for deep neural network fairness,'' in
  \emph{Proceedings of the 30th ACM Joint European Software Engineering
  Conference and Symposium on the Foundations of Software Engineering}, ser.
  ESEC/FSE 2022.\hskip 1em plus 0.5em minus 0.4em\relax New York, NY, USA:
  Association for Computing Machinery, 2022, p. 1173–1184. [Online].
  Available: \url{https://doi.org/10.1145/3540250.3549169}
\BIBentrySTDinterwordspacing

\bibitem{sun3}
P.~Zhang, J.~Wang, J.~Sun, X.~Wang, G.~Dong, X.~Wang, T.~Dai, and J.~S. Dong,
  ``Automatic fairness testing of neural classifiers through adversarial
  sampling,'' \emph{IEEE Transactions on Software Engineering}, vol.~48, no.~9,
  pp. 3593--3612, 2022.

\bibitem{sun4}
B.~Sun, J.~Sun, T.~Dai, and L.~Zhang, ``Probabilistic verification of neural
  networks against group fairness,'' in \emph{Formal Methods}, M.~Huisman,
  C.~P{\u{a}}s{\u{a}}reanu, and N.~Zhan, Eds.\hskip 1em plus 0.5em minus
  0.4em\relax Cham: Springer International Publishing, 2021, pp. 83--102.

\bibitem{scikitlearn}
F.~Pedregosa, G.~Varoquaux, A.~Gramfort, V.~Michel, B.~Thirion, O.~Grisel,
  M.~Blondel, P.~Prettenhofer, R.~Weiss, V.~Dubourg, J.~Vanderplas, A.~Passos,
  D.~Cournapeau, M.~Brucher, M.~Perrot, and E.~Duchesnay, ``Scikit-learn:
  Machine learning in {P}ython,'' \emph{Journal of Machine Learning Research},
  vol.~12, pp. 2825--2830, 2011.

\end{thebibliography}

\end{document}